\def\td{{\text d}}
\def\dt{{\text dt}}
\def\L{{\mathbb L}}
\def\R{{\mathbb R}}
\def\PP{ \mathcal{P} }
\def\NN{ \mathcal{N} }
\def\LL{ \mathcal{L} }
\newcommand{\WF}[2]{\operatorname{WF}\left(\left[#1\right],\left[#2\right]\right)}
\newcommand{\KL}{\operatorname{KL}}
\newcommand{\IS}{\operatorname{IS}}
\newcommand{\kNN}{\operatorname{KNN}}
\def\byn{ \boldsymbol{y}_{\boldsymbol{n}} }
\def\brn{ \boldsymbol{r}_{\boldsymbol{n}} }
\def\bcn{ \boldsymbol{c}_{\boldsymbol{n}} }
\def\bsn{ \boldsymbol{s}_{\boldsymbol{n}} }
\DeclareMathOperator*{\argmin}{arg\,min}
\newcommand{\E}[1]{\mathbb E \left[#1\right]}
\newtheorem{proposition}{Proposition}  
\newtheorem{definition}{Definition}
\newtheorem{theorem}{Theorem}
\newtheorem{remark}{Remark}
\newcommand{\CR}[1]{{\color{black} #1}}
\newcommand{\bb}[1]{{\color{blue} \textbf{#1}}}
\newcommand{\br}[1]{{\color{red} \textbf{#1}}}
\begin{document}

\title{The Wasserstein-Fourier Distance \\for Stationary Time Series\footnote{This work was produced when E.~Cazelles and A.~Robert were with the Center for Mathematical Modeling. This work was supported by Fondecyt-Postdoctorado \#3190926, Fondecyt-Iniciación \#11171165, ANID-AFB170001 Center for Mathematical Modeling, and ANID-FB0008 Advanced Center for Electrical and Electronic Engineering. Corresponding author: Felipe Tobar (ftobar@dim.uchile.cl).}}

\author[1]{Elsa Cazelles}
\author[2]{Arnaud Robert}
\author[3]{Felipe Tobar}
\affil[1]{IRIT,  Universit\'{e}  de  Toulouse, CNRS}
\affil[2]{Brain $\&$ Behaviour Lab, Dept. of Computing, Imperial College London}
\affil[3]{Center for Mathematical Modeling, Universidad de Chile}


\maketitle


\begin{abstract}
We propose the Wasserstein-Fourier (WF) distance to measure the (dis)similarity between time series by quantifying the displacement of their energy across frequencies. The WF distance operates by calculating the Wasserstein distance between the (normalised) power spectral densities (NPSD) of time series. Yet this rationale has been considered in the past, we fill a gap in the open literature providing a formal introduction of this distance, together with its main properties from the joint perspective of Fourier analysis and optimal transport. As the main aim of this work is to validate WF as a general-purpose metric for time series, we illustrate its applicability on three broad contexts. First, we rely on WF to implement a PCA-like dimensionality reduction for NPSDs which allows for meaningful visualisation and pattern recognition applications. Second, we show that the geometry induced by WF on the space of NPSDs admits a \emph{geodesic} interpolant between time series, thus enabling data augmentation on the spectral domain, \CR{by averaging the dynamic content of two signals.} Third, we implement WF for time series classification using parametric/non-parametric classifiers and compare it to other classical metrics. Supported on theoretical results, as well as synthetic illustrations and experiments on real-world data, this work establishes WF as a meaningful and capable resource pertinent to general distance-based applications of time series. 

\end{abstract}



\section{Introduction} 
\label{sec:intro}  

Time series analysis is ubiquitous in a number of applications: from seismology to audio enhancement, from astronomy to fault detection, and from underwater navigation to source separation. Time series analysis occurs naturally in the spectral domain, where signals are represented  in terms of how their energy is spread across frequencies. The spectral content of a signal is given by its---e.g., Fourier---power spectral density (PSD), which can be computed via the Periodogram \cite{periodogram} or parametric approaches such as the autoregressive and Yule-Walker methods \cite{Yule267,Walker518}. Comparing time series through their respective PSDs has been considered since the late 1960's, e.g., using the Itakura-Saito (IS) distance \cite{itakura1968analysis} given by the Bregman divergence associated to the logarithm function between PSDs. Today, comparing time series in the spectral domain is still relevant mainly due to the recent advances on spectral estimation that are robust to noise, missing values, and general acquisition \emph{artefacts} \cite{turner_sahani,choudhuri_2004,protopapas,BABU2010359,tobar18}. However, the IS divergence (as well as most distances between functions) is of no use when the spectral supports\footnote{The \emph{support} of a function is the subset of the input space where the function is strictly greater than zero, therefore, the spectral support is the set of frequency components that have energy and thus are present in the time series.} of the functions differ. Therefore, developing sound metrics for comparing general PSDs remains essential. 

Beyond the realm of spectral analysis, the \emph{de facto} tool for comparing general densities are Bregman divergences \cite{amari2007methods, banerjee2005clustering}. The most popular one being the Kullback-Leibler (KL) divergence, which requires that one distribution \emph{dominates}\footnote{Distribution $p$ dominates distribution $q$ if the support of $q$ is included in the support of $p$.} the other. To a lesser extent, the Euclidean distance has also been used for general-purpose comparison of distributions. In this article, we claim that the KL, IS and Euclidean distances have a fundamental drawback when comparing distributions of power, and thus they impose stringent requirements on the densities under study. In particular, these distances are only meaningful when the densities share a common support. In a nutshell, our point of view follows from the fact that the KL, IS and Euclidean distances are \emph{vertical} divergences, meaning that they compare the (point-wise) values of two distributions for a given location in the $x$-axis (frequency). We, however, focus on how spectral energy is distributed across different values of the $x$-axis, thus requiring a \emph{horizontal} distance. Fig.~\ref{fig:hor_vert} illustrates the differences between a vertical and a horizontal distance using two Gaussians: observe that when these Gaussians become farther apart, the vertical distance becomes insensitive, whereas the horizontal one faithfully represents the \emph{displacement} regardless of how far apart the densities are.
\begin{figure}[!ht]
\centering
\includegraphics[height=4em]{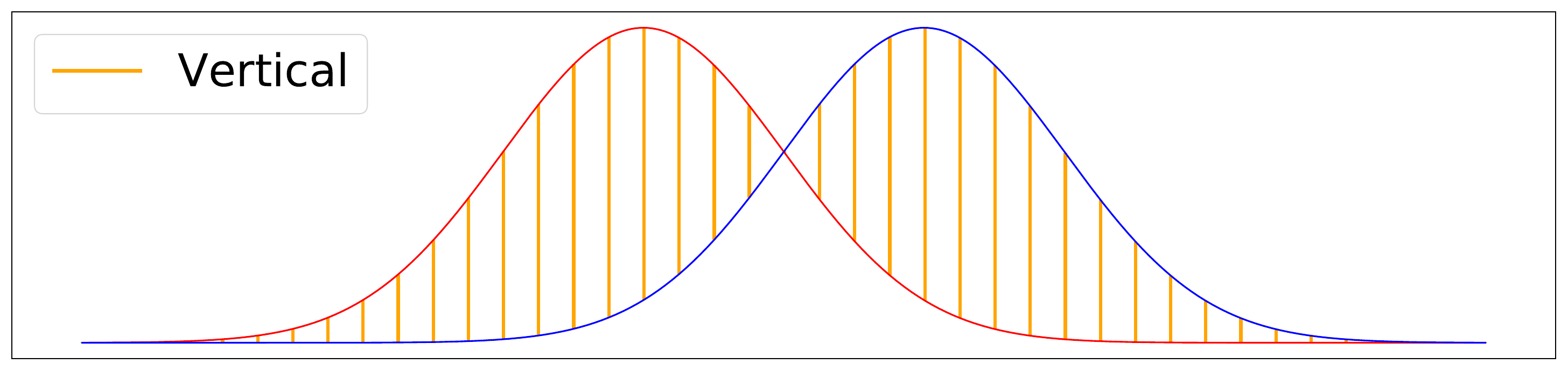}
\includegraphics[height=4em]{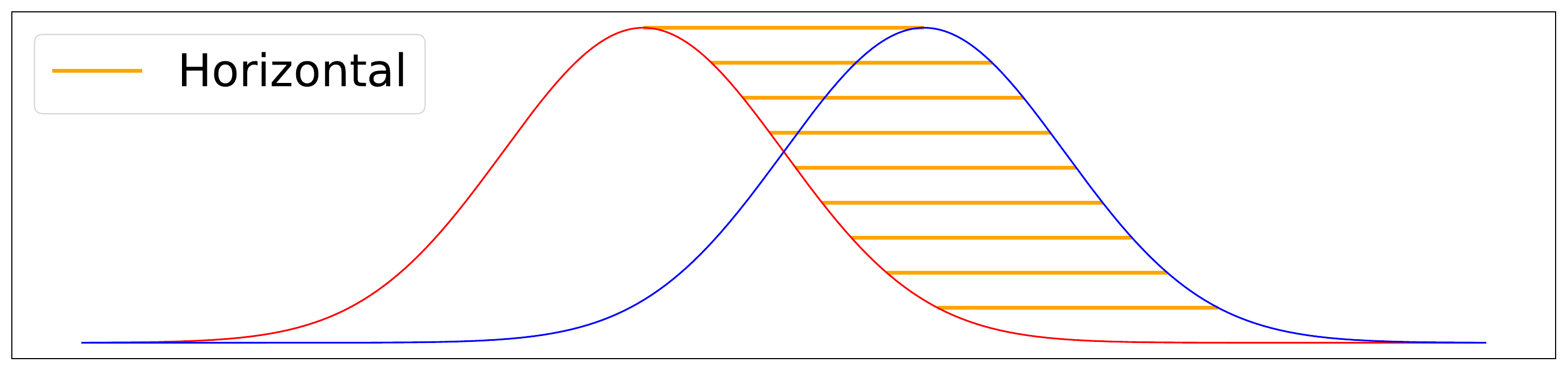}
  \caption{Vertical (top) and horizontal (bottom) displacements of two Gaussian distributions.}
  \label{fig:hor_vert}
\end{figure}

With the objective of building a meaningful, horizontal, spectral distance between time series, we rely on optimal transport (OT) and in particular on the Wasserstein distance \cite{villani2008optimal}, which is given by the cost of the \emph{optimal transportation} (OT) between two probability distributions. This distance has already proven successful in a number of unsupervised learning applications such as clustering (of distributions) \cite{ye2017fast}, generative adversarial networks \cite{arjovsky2017wasserstein} and Bayesian model selection \cite{BLWB}. Using probability distances for PSDs is also supported by the work of Basseville in \cite{basseville1989distance}, who in the late 1980's had already categorised the distances for PSDs as \emph{probability based} or \emph{frequency based}. In the same line, the use of the Wasserstein distance on the space of PSDs is not new, related works include those of Flamary et al.~in \cite{flamary2016optimal}, which built a dictionary of fundamental and harmonic frequencies thus emphasising the importance of moving mass along the frequency dimension, and also  \cite{rolet2018blind}, that followed the same rationale for supervised speech blind source separation. However, besides specific-purpose previous works, the open literature is still lacking a systematic study of the properties of this distance and its competence in general-purpose time series applications.

\textbf{Contributions.} Our hypothesis is that a rigorous framework for applying the Wasserstein distance to Fourier (power) representations, as well as understanding its geometric and statistical properties, would lay the foundations for developing novel tools for time series analysis. In that sense, the main contribution of this article is to propose an alternative, largely unexplored so far, general-purpose distance between time series based on OT. Additionally, specific contributions that are, to the best of our knowledge, novel in the literature include: 
\begin{itemize}
	\item The formal definition of the Wasserstein-Fourier (WF) distance and presentation of properties derived from a joint OT and spectral analysis standpoint (Sec.~\ref{sec:properties})
	\item The convergence behaviour of the WF distance and how it relates to the convergence of autocorrelation functions (Sec.~\ref{sec:convergence}, Prop.~\ref{prop:conv})
	\item A \emph{geodesic}, i.e., minimum-cost, interpolant between time series based on the WF distance. \CR{This interpolation reveals the dynamic evolution between signals,} and can be used for data augmentation (Sec.~\ref{sec:geodesic})
	\item A PCA-like dimensionality reduction technique based on the WF distance, specially suited for spectral representations, that allows for visualisation and pattern recognition (Sec.~\ref{sec:PCA})
	\item A set of classifiers for time series by equipping standard parametric/non-parametric classifiers with the WF distance (Sec.~\ref{sec:log_reg})
	\item All the above conceptual packages are equipped with practical validation on illustrative synthetic examples and real-world data throughout the paper. \CR{The code to reproduce the experiments is available in Python at \url{https://github.com/GAMES-UChile/Wasserstein-Fourier}.}
\end{itemize}

\textbf{Organisation.} The rest of this article is organised as follows. Section \ref{sec:background} presents the technical background for Fourier analysis and OT required to present our contribution. Then, Section \ref{sec:definition} presents the proposed Wasserstein-Fourier distance and analyses its properties from OT and Signal Processing perspectives. Sections \ref{sec:geodesic}, \ref{sec:PCA} and \ref{sec:log_reg} are devoted to the validation of the proposed distance through the three distance-based sets of experiments using synthetic and real-world datasets: interpolation paths between time series (and Gaussian processes), spectral dimensionality reduction and parametric/non-parametric regression. The paper then concludes with an assessment of our findings and a proposal for future research activities.


\section{Background, assumptions and desiderata}
\label{sec:background} 


\subsection{An equivalence class for time series}
\label{sec:class}

We consider \textit{Fourier} PSDs \cite{kay:88}, based on the Fourier transform given by $\hat{x}(\xi)=\int_{-\infty}^{\infty}x(t)e^{-j2\pi\xi t}\dt$ (we denote by $j$ the imaginary unit). The Fourier PSD for the continuous-time, stationary, stochastic  time series $x(t)$ considered in this work is then given by
\begin{equation}
  S(\xi) = \lim_{T\rightarrow\infty}  \E{\frac{1}{2T}\left|\int_{-T}^T x(t)e^{-j2\pi t \xi}\td t\right|^2}.
\end{equation}
\begin{remark}

Though our work is presented for in the continuous-time, univariate, Fourier setting, the proposed WF distance can be applied to multivariate input, discrete time and arbitrary spectral domains. \CR{Critically, the WF distance is able to compare discrete-time signals against continuous-time ones. The only requirement of our method is that the spectral energy exists and it is finite, for instance, in the discrete-time case this translates to absolute summability and stationarity.} Furthermore, the scope of our work is on  wide-sense stationary signals, henceforth referred to as \emph{stationary}. 

\end{remark}
We consider a unit-norm version of the Fourier PSD termed normalised power spectral density (NPSD) given by
\begin{equation}
   s(\xi) = \frac{S(\xi)}{\int S(\xi')\td \xi'}.
   \label{eq:def_npsd}
\end{equation}
The justification for this choice is twofold. First, representing time series by a NPSD allows us to exploit the vast literature on divergences for probability distributions. Second, to compare the dynamic content of time series, i.e., how they evolve in time, the magnitude of the time series is irrelevant and so is the magnitude of its PSD (due to the linearity of the Fourier transform).

Choosing NPSDs and neglecting both the signal's power and phase makes our representation blind to time-shifts (phase) and scaling. For stochastic processes, this representation has an interesting interpretation: as the PSD of a stationary stochastic process is uniquely linked to its covariance kernel (via Bochner's theorem \cite{salomon}), all processes with proportional kernels will have the same NPSD representation. This is particularly useful for Gaussian process (GPs, \cite{Rasmussen:2006}), as it allows us to define a proper distance between GPs---see Sec.~\ref{ssub:gps}. 

The NPSD representation thus induces an equivalence class for time series, denoted by $[x] = \{ x' | s_{x'}=s_x\}$, where $s_x$ denotes the NPSD of $x$. The NPSD is then a non-injective embedding (or \emph{projection}) from the space of time series onto the space of probability distributions due to the unit-norm choice. Time series belonging to the same equivalence class are thus understood to have the same dynamic content. As a consequence, we can define a distance between two (equivalence) classes of time series as the distance between their corresponding NPSDs, where the latter can be chosen as a divergence for probability distributions such as the Euclidean, Kullback-Leibler or Itakura-Saito divergences. In this work, we consider the Wasserstein distance instead.

\subsection{Optimal transport}

Optimal transport (OT, \cite{villani2008optimal}) compares two probability distributions paying special attention to the geometry of their domain. This comparison is achieved by finding the lowest cost to transfer the mass from one probability distribution onto the other. Specifically, the \emph{optimal transport} between two measures $\mu$ and $\nu$ defined on $\R^d$ is given by
\begin{equation}
\label{def:OTgeneral}
\underset{\pi\in\Pi(\mu,\nu)}{\min} \int_{\R^d\times\R^d}c(x,y)\td\pi(x,y),
\end{equation}
where $\pi$ is a joint distribution for $x$ and $y$, referred to as the \emph{transport plan}, belonging to the space $\Pi(\mu,\nu)$ of the product measures on $\R^d\times\R^d$ with marginals $\mu$ and $\nu$; and $c:\R^d\times\R^d\mapsto\R$ is a cost function.
In this manuscript, we focus on the \emph{Wasserstein distance}, or Earth Mover's distance \cite{rubner2000earth}, which is obtained by setting $c(x,y)=\Vert x-y\Vert^p$, $p\geq 1$, in eq.~\eqref{def:OTgeneral}. This distance, denoted by 
\begin{equation}
\label{def:OT}
W_p(\mu,\nu) = \left(\underset{\pi\in\Pi(\mu,\nu)}{\min} \int_{\R^d\times\R^d}\Vert x-y\Vert^p\td\pi(x,y)\right)^{1/p},
\end{equation}
\emph{metrises} the space $\PP_p(\R^d)$ of measures admitting moments of order $p\geq 1$. Remarkably, OT immediately addresses the key challenge highlighted in Sec.~\ref{sec:intro}, as it allows us to compare meaningfully continuous and discrete densities of different supports unlike the Euclidean distance and KL and IS divergences.

Additionally, notice that for one-dimensional probability measures---such as the NPSDs of a one dimensional signal, the optimisation \eqref{def:OT} is closed-form and boils down to 
\begin{equation}
\label{def:Wdim1}
	W_p(\mu,\nu) = \left(\int_0^1 \vert F^{-}_{\mu}(t)-F^{-}_{\nu}(t)\vert^p\td t\right)^{1/p},	
\end{equation}
where $F^{-}_{\mu}$ represents the inverse cumulative function of $\mu\in\PP_p(\R)$; this means that  the calculation of the 1D Wasserstein distance is computationally inexpensive.  We refer the reader to \cite{villani2008optimal} for a theoretical presentation of OT and to \cite{computationalOT} for recent computational aspects.


\section[A Wasserstein-based distance between Fourier power spectra]{A Wasserstein-based distance between \\ Fourier power spectra}
\label{sec:definition}
Let us consider the stationary signals $x$ and $y$ belonging to two classes of time series denoted by $[x]$ and $[y]$ respectively.
\begin{definition}
The Wasserstein-Fourier (WF) distance between two equivalence classes of time series $[x]$ and $[y]$ is given by
\begin{equation}
\label{def:distance}
\WF{x}{y} = W_2(s_x,s_y),
\end{equation}
where $s_x$ and $s_y$ denote the NPSDs associated to $[x]$ and $[y]$ respectively, as defined in eq.~\eqref{eq:def_npsd}. 
\end{definition}
Note that by a slight abuse of notation, $s_x$ denotes both the probability density (or mass) function and the measure itself.

\begin{remark}
The proposed WF distance holds for multi-dimensional signals, as long as their NPSDs exist. \CR{The contributions of this paper can then be straightforwardly applied to multi-dimensional signals, by considering NPSDs that are distributions supported on $\R^d, d>1$, and for which the Wasserstein distance can be computed.} However, in  most time series applications presented here, such as the experiments in Sec. \ref{sec:geodesic}, \ref{sec:PCA} and \ref{sec:log_reg}, it is only needed to compute PSDs of uni-dimensional data. In such cases, the computational cost of the proposed WF distance is only of linear order.
\end{remark}

Since the function $\operatorname{WF}(\cdot,\cdot)$ inherits the properties of the Wasserstein distance, we directly obtain the following result.
\begin{theorem}
\label{th:WFdistance}
$\operatorname{WF}(\cdot,\cdot)$ is a distance over the space of equivalence classes of time series.
\end{theorem}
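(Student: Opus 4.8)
The plan is to verify the three defining axioms of a metric for $\operatorname{WF}(\cdot,\cdot)$ on the space of equivalence classes of time series, transferring them from the known metric properties of $W_2$ on $\PP_2(\R)$. Recall from Section~\ref{sec:background} that $W_2$ \emph{metrises} $\PP_2(\R)$, so for any $\mu,\nu,\rho\in\PP_2(\R)$ we have $W_2(\mu,\nu)\geq 0$, $W_2(\mu,\nu)=W_2(\nu,\mu)$, the triangle inequality $W_2(\mu,\rho)\leq W_2(\mu,\nu)+W_2(\nu,\rho)$, and $W_2(\mu,\nu)=0$ if and only if $\mu=\nu$ as measures. Since $\WF{x}{y}=W_2(s_x,s_y)$ by Definition~\ref{def:distance}, symmetry and the triangle inequality are immediate: they follow by applying the corresponding property of $W_2$ to the NPSDs $s_x,s_y,s_z$. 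Non-negativity is equally immediate.

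The one step requiring genuine attention is the \emph{identity of indiscernibles} — i.e., that $\WF{x}{y}=0$ implies $[x]=[y]$ (the converse being trivial since $s_x=s_y$ whenever $x,y$ lie in the same class). First I would note that $W_2(s_x,s_y)=0$ forces $s_x=s_y$ as probability measures, by the metric property of $W_2$. What remains is the observation that this is exactly the condition defining the equivalence relation: by construction in Section~\ref{sec:class}, two time series are declared equivalent precisely when their NPSDs coincide, so $s_x=s_y$ is equivalent to $[x]=[y]$. Thus $\operatorname{WF}$ is well-defined on equivalence classes and separates distinct classes.

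The only subtlety — and the step I expect to be the main (modest) obstacle — is \textbf{well-definedness} of $\operatorname{WF}$ as a function on the quotient space: one must check that $\WF{x}{y}$ does not depend on the choice of representatives $x\in[x]$, $y\in[y]$. This is precisely why the equivalence relation was defined through equality of NPSDs: if $x'\in[x]$ then $s_{x'}=s_x$, hence $W_2(s_{x'},s_{y'})=W_2(s_x,s_y)$, so the value is representative-independent. A secondary point worth a sentence is domain validity: one must assume (or note, as the paper implicitly does) that the NPSDs under consideration lie in $\PP_2(\R)$, i.e., admit a finite second moment, so that $W_2$ is finite and the metric machinery applies; this is the natural regularity assumption on the class of time series being compared. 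With these observations in place, the theorem follows directly, and I would present it in three short labelled steps: (i) well-definedness on the quotient, (ii) symmetry, non-negativity, and triangle inequality inherited verbatim from $W_2$, and (iii) separation via $W_2(s_x,s_y)=0\iff s_x=s_y\iff[x]=[y]$.
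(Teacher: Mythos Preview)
Your proposal is correct and follows essentially the same approach as the paper: both arguments transfer the four metric axioms directly from $W_2$ to $\operatorname{WF}$ via the identity $\WF{x}{y}=W_2(s_x,s_y)$, with the identity of indiscernibles handled by the equivalence $s_x=s_y\iff[x]=[y]$. Your version is in fact slightly more careful, since you make explicit the well-definedness on the quotient (representative independence) and the finite-second-moment hypothesis on the NPSDs, both of which the paper leaves implicit.
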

\begin{proof}
The proof follows from the properties of the $W_2$ distance. For three arbitrary time series $x\in [x]$, $y\in [y]$ and $z\in [z]$, WF verifies:\\
\textbf{(i)} non-negativity: $\WF{x}{y}\geq 0$ is direct by the non-negativity of $W_2$,\\
\textbf{(ii)} identity of indiscernible: $\WF{x}{y}=W_2(s_x,s_y)=0$ is equivalent to $s_x=s_y$, and by definition of the equivalence class, to $[x]=[y]$,\\
\textbf{(iii)} symmetry: $\WF{x}{y}=W_2(s_x,s_y)=W_2(s_y,s_x)=\WF{y}{x}$,\\
\textbf{(iv)} triangle inequality: $\WF{x}{y}=W_2(s_x,s_y)\leq W_2(s_x,s_z)+W_2(s_z,s_y)=\WF{x}{z}+\WF{z}{y}$,
which concludes the proof.
\end{proof}

\subsection{Properties of the WF distance}
\label{sec:properties}

We next state key features of the proposed WF distance, which follow naturally from the properties of the Wasserstein distance and the Fourier transform and, therefore, are of general interest for the interface between signal processing and OT. In what follows, we consider two arbitrary (possibly complex-valued) time series denoted $x(t)$ and $y(t)$, also, we denote the PSD of $x$ as $S_x$ and its NPSD as $s_x$.

\textbf{\CR{Time shift.}} If $y(t)$ is a time-shifted version of $x(t)=y(t-t_0)$, their Fourier transforms are linked by $\hat{x}(\xi)=e^{2j\pi t_0\xi}\hat{y}(\xi)$ and since $\vert \hat{x}(\xi)\vert = \vert e^{2j\pi t_0\xi}\vert\times\vert\hat{y}(\xi)\vert= \vert\hat{y}(\xi)\vert$, their PSDs are equal $S_x=S_y$. Therefore, 
\begin{equation}
	\WF{x}{y}=0,
\end{equation}
which is in line with our aim (Sec.\ref{sec:class}) of constructing a distance that is invariant under time shifts.

\textbf{Time scaling.} If $x(t) = y(at), a>0$, we obtain $\hat{x}(\xi)=\frac{1}{a}\hat{y}(\frac{\xi}{a})$. \CR{The PSD is therefore given by $S_x(\xi)=\frac{1}{a^2}S_{y}(\frac{\xi}{a})$ and $\int S_x(u)\td u=\frac{1}{a}\int S_y(u)\td u$,} leading to an NPSD $s_x(\xi) = \frac{1}{a}s_y\left(\frac{\xi}{a}\right)$. In this case, the WF distance translates the scaling effect of magnitude $a$, since the cumulative distribution functions write $F_x(\xi)=F_y(\frac{\xi}{a})$, and therefore the inverse cumulative functions obey  $F^{-}_x(t):=\inf\{\xi\in\R\ : \ F_y(\frac{\xi}{a})\geq t \}=a F^{-}_y(t)$. Thus,
\begin{align}
	\WF{x}{y} &= \left(\int_0^1\vert a F^{-}_y(t)-F^{-}_y(t)\vert^2\td t\right)^{1/2}\nonumber\\
	&=\vert a-1\vert \left(\int_0^1\vert F^{-}_y(t)\vert^2\td t\right)^{1/2}\nonumber\\
	&=\vert a-1\vert \left(\int \vert u\vert^2 \td s_y(u)\right)^{1/2},
\end{align}
where we applied the change of variable $u=F^{-}_y(t)$.

\textbf{\CR{Frequency shift.}} If $x(t)=e^{2j\pi \xi_0 t}y(t)$ is a frequency-modulated version of the signal $y$, their NPSDs then verify $s_x(\xi)= s_y(\xi-\xi_0)$, which corresponds to a translation of the densities. \CR{Therefore, their cumulative distribution functions read $F^{-}_x(t) =  F^{-}_y(t)+\xi_0$, and Definition \eqref{def:Wdim1} leads to}
\begin{equation}
	\WF{x}{y} = \vert \xi_0\vert.
\end{equation}

\textbf{Non-convexity.} Observe that convexity (in each variable) of $W_p$ does not imply convexity of the WF distance as shown in the following counter-example. Let us consider the signals:
\begin{align}
	x(t) &= \cos(\omega t)\label{eq:test_s1}\\
	y^{\pm}(t) &= \pm x(t)+\frac{1}{r}\cos(\alpha t)\label{eq:test_s2},
\end{align} 
with  $r>0$. Then for $r$ large enough we have
\begin{align}
	\WF{\frac{y^{+}+y^{-}}{2}}{x}   &=\vert \omega-\alpha\vert\label{eq:nonconv}\\ 
									&> \vert \omega-\alpha\vert / (2\pi\sqrt{r^2+1})\nonumber \\
							        &= \frac{\WF{y^{+}}{x}+\WF{y^{-}}{x}}{2}.\nonumber
\end{align}
We explain the last equality in detail in Appendix \ref{appendix:nonconv}. In particular, the convexity of $W_p$ is not preserved by WF due to the normalisation of the PSDs.

\subsection{Convergence properties in connection to the time domain} 
\label{sec:convergence}

In order to validate the WF distance as a sound metric for statistical time-series, we now analyse the convergence of the empirical NPSD (to the true NPSD) in Wasserstein distance, in relationship to the pointwise convergence of the empirical autocovariance function of a stationary time series. 

The autocovariance function (at lag $h$) of a stationary zero-mean time series $y\in [y]$ is defined by 
\begin{equation}
	c(h) = \E{y(t)y^{\ast}(t+h) },	
\end{equation}
and its normalised version referred to as \textit{autocorrelation function (ACF)} is given by 
\begin{equation}
 	r(h) = c(h)/c(0).\label{eq:acf}
 \end{equation}
Furthermore, by Bochner theorem \cite{salomon} we have that 
\begin{equation}
\label{eq:Wiener}
S(\xi) = \int_{\R} c(h) e^{-j2\pi h\xi}\td h,
\end{equation}
where $S$ denotes the PSD of $y$. Applying the inverse Fourier transform (given by $x(t) = \int \hat{x}(\xi) e^{2\pi j t\xi}\td\xi$) to the above equation leads to $c(0)=\int S(\xi)\td\xi$, meaning that the normalising constant of the ACF in eq.~\eqref{eq:acf} is the one required to compute the NPSD proposed in eq.~\eqref{eq:def_npsd}. As a consequence, we can interpret a \emph{normalised version} of Bochner theorem that associates the NPSD with the ACF   by
\begin{equation}
\label{eq:Wiener2}
s(\xi) = \int_{\R} r(h) e^{-j2\pi h\xi}\td h,
\end{equation}
rather than the PSD with the covariance as the original theorem.

Leaning on this relationship, the following convergence result ties the sample ACF to the sample time series. For the sake of simplicity, we consider a zero-mean discrete-time series (i.e., a vector of infinitely-countable, absolutely summable,  entries).

\begin{proposition}
\label{prop:conv}
Let $y$ be a discrete-time zero-mean series and $\byn=[y_1,\ldots,y_n]$ be a sample of $y$, with $r$ and $\brn$ their true and empirical\footnote{We denote the empirical ACF $\brn$ and covariance $\bcn$ by (see e.g. \cite{chatfield2016analysis})
$$\brn(h) = \frac{\bcn(h)}{\bcn(0)},\quad \mbox{where}\quad \bcn(h)=\frac{1}{n}\sum_{t=0}^{n- h }y_ty^{\ast}_{t+h}.$$} ACFs respectively. Then, if $y$  is band limited and stationary, we have
\begin{equation}
	\underset{n\to\infty}{\lim} \brn(h) = r(h) \Leftrightarrow \underset{n\to\infty}{\lim} \WF{\byn}{y} = 0.
\end{equation}
\end{proposition}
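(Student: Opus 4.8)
The plan is to re-express both sides of the equivalence as statements about the NPSD measures $s_{\byn}$ and $s_y$ through the (discrete-time analogue of the) normalised Bochner relation~\eqref{eq:Wiener2}, and then exploit that, on a bounded frequency domain, convergence of the Fourier coefficients, weak convergence of probability measures, and convergence in $W_2$ all coincide.

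First I would set up the measures. Since $y$ is discrete-time, both $s_y$ and the sample NPSD $s_{\byn}$ are supported on the Nyquist interval $I=[-\tfrac12,\tfrac12]$, and band-limitedness further confines the mass of $s_y$ to a fixed compact subinterval of $I$. The sample PSD is $S_{\byn}(\xi)=\bigl|\sum_{t=1}^{n} y_t e^{-j2\pi t\xi}\bigr|^2\ge 0$, so after the normalisation in~\eqref{eq:def_npsd} the measure $s_{\byn}$ is a genuine probability measure on $I$ (equivalently, the biased empirical autocovariance $\bcn$ is positive semidefinite). Computing its Fourier coefficients, with Parseval supplying the normalising constant $\int_I S_{\byn}=\sum_t|y_t|^2$, gives $s_{\byn}(\xi)=\sum_{h\in\Z}\brn(h)e^{-j2\pi h\xi}$, hence $\int_I e^{j2\pi h\xi}\,\td s_{\byn}(\xi)=\brn(h)$ for every $h\in\Z$; by~\eqref{eq:Wiener2} one has likewise $\int_I e^{j2\pi h\xi}\,\td s_y(\xi)=r(h)$, where $s_y$ is a bona fide probability measure by Bochner's theorem applied to the stationary $y$. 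Thus $\{\brn(h)\}_h$ and $\{r(h)\}_h$ are exactly the characteristic functions of $s_{\byn}$ and $s_y$ sampled at the integers.

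For the forward implication, assume $\brn(h)\to r(h)$ for every fixed $h$. The family $\{s_{\byn}\}_n$ is automatically tight on the compact set $I$, so by Prokhorov every subsequence admits a further weakly convergent subsequence; any weak limit $\mu$ satisfies $\int_I e^{j2\pi h\xi}\,\td\mu(\xi)=\lim_n\brn(h)=r(h)$, and since the integer-frequency exponentials form a complete system on $I$ this forces $\mu=s_y$, so $s_{\byn}\rightharpoonup s_y$. As all these measures are supported in the bounded set $I$, their second moments are uniformly bounded and hence uniformly integrable, so weak convergence upgrades to convergence in $W_2$ (see~\cite{villani2008optimal}); that is $\WF{\byn}{y}=W_2(s_{\byn},s_y)\to0$. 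The converse is immediate: $W_2(s_{\byn},s_y)\to0$ implies $s_{\byn}\rightharpoonup s_y$, and testing against the bounded continuous function $\xi\mapsto e^{j2\pi h\xi}$ yields $\brn(h)\to r(h)$ for every $h$.

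The main obstacle, and the place where discreteness and band-limitedness are genuinely used, is the passage from pointwise convergence of the autocovariance to weak convergence of the NPSDs: it works precisely because the NPSDs are probability measures on the compact Nyquist interval, on which the coefficients $\{r(h)\}_{h\in\Z}$ determine the measure and tightness is free; on an unbounded frequency axis one would instead need a full Lévy continuity argument with control at all real frequencies and a separate tightness/moment argument. The remaining ingredients — nonnegativity of the periodogram, automatic tightness and uniform integrability of second moments on a bounded interval, and the weak-convergence $\Leftrightarrow$ $W_2$-convergence equivalence there — are standard.
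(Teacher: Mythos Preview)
Your argument is correct and follows the same overall architecture as the paper's proof: translate both sides into statements about the NPSD measures via the normalised Bochner relation, use weak convergence as the bridge, and exploit compact support to pass between weak and $W_2$ convergence. The backward implication is handled identically in both.

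The forward implication is where your route diverges from the paper's. The paper first asserts that pointwise convergence $\brn(h)\to r(h)$ is equivalent to pointwise convergence of the densities $\bsn(\xi)\to s(\xi)$ by ``continuity of the Fourier transform'', then invokes Scheff\'e's lemma to obtain weak convergence, and finally uses the dominated convergence theorem (with the band-limited hypothesis supplying the compact support) to get convergence of second moments and hence $W_2\to 0$. You instead bypass any pointwise density statement: you observe that the $\brn(h)$ are precisely the Fourier coefficients of $s_{\byn}$ on the compact Nyquist interval, use automatic tightness there plus Prokhorov to extract weak subsequential limits, and identify the limit via completeness of the integer exponentials on the torus; uniform integrability of second moments on a bounded domain then gives $W_2$ convergence. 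Your route is essentially a L\'evy continuity argument on the circle, and it is arguably cleaner: the paper's appeal to ``continuity of the Fourier transform'' to get pointwise density convergence from pointwise coefficient convergence is not fully justified as stated (one would need $\ell^1$ control on the ACFs or similar), whereas your tightness-plus-uniqueness argument needs only the pointwise hypothesis actually assumed. What the paper's route buys, when it works, is a slightly more direct line through Scheff\'e; what yours buys is robustness and a more transparent use of the compactness coming from discreteness/band-limitedness.
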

The proof can be found in the Appendix \ref{appendix:proof}. Note that the band-limited assumption is actually not needed for the left implication $[\Leftarrow]$.



\section{A geodesic path between two times series}
\label{sec:geodesic}

We propose an interpolation framework for time series in the sense of \emph{travelling} from one time series, through the space of time series, to another time series. We clarify that by \emph{interpolation} we are not referring to data imputation of \emph{missing observations}. Though interpolation is common in  machine learning applications, to the best of our knowledge interpolation of time series is largely unexplored \CR{despite its possible impact on the development of novel regularisation techniques for signal processing, as audio smoothing for instance.}

\subsection{The need for spectral interpolation} 
\label{sub:need_interpolant}

Our proposed interpolation is based on the geodesic structure of the Wasserstein space \cite[Chap. 7]{ambrosio2008gradient}, which allows us to construct a geodesic trajectory, i.e., \emph{the shortest path}, between two time series---notice that the notion of ``shortest'' relies on the distance considered. Recall that for the usual Euclidean ($\L_2$) distance on the space of time series (functions of time with compact support), the trajectory between signals $x_1$ and $x_2$ consists in a $\L_2$ path that starts at $x_1$ and ends at $x_2$ by point-wise interpolation, that is, a superposition of signals of the form
\begin{equation}
   x_\gamma(t) =  \gamma x_1(t) + (1-\gamma)x_2(t),\qquad \gamma\in[0,1].
 \end{equation} 
This interpolation corresponds to  a spatial superposition of two signals from different sources (e.g., seismic waves), and is meaningful in settings such as source separation in Signal Processing, also referred to as the \emph{cocktail party} \cite{cherry1953some}. More generally, in general physical models that consider \emph{white noise}, there is an implicit assumption of additivity, where the spectral densities are linear combinations in the $\L_2$ (vertical) sense.

In other applications, however, such as audio (classification), body motions sensors, and other intrinsically-oscillatory processes, the superposition of time series fails to provide insight and understanding of the data. For instance, the $\L_2$ average of hand gestures from different experiments would probably convey little information about the true gesture and it is likely to quickly vanish due to the random phases (or temporal offsets). These applications, where spectral content is key, would benefit from a spectral interpolation rather than a temporal superposition. The need for a spectral-based interpolation method can be illustrated considering two sinusoidal time series of frequencies $\omega_1\geq\omega_2$, where the intuitive interpolation would be a sinusoidal series of a single frequency $w_\gamma$ such that $\omega_1\geq w_\gamma \geq\omega_2$. We will see that, while the $\L_2$ interpolation is unable to obtain such results, the proposed WF-based interpolation successfully does so.  

\subsection{Definition of the interpolant} 
\label{sub:def_interpolant}

Using the proposed WF distance and an extension of the McCann's interpolant, namely a constant-speed Wasserstein geodesic \cite[Theorem 7.2.2.]{ambrosio2008gradient}, we construct the WF-interpolant between time series  $x_1$ to $x_2$ as follows:
\begin{enumerate}
\item[(i)] embed signals $x_1$ and $x_2$ into the frequency domain by computing their NPSDs $s_1$ and $s_2$ according to eq.~\eqref{eq:def_npsd},
\item[(ii)] compute a constant-speed geodesic $(g_{\gamma})_{{\gamma}\in [0,1]}$ between $s_1$ and $s_2$ as
\begin{equation}
\label{def:geodesic}
g_{\gamma} = p_{\gamma}\#\pi^{\ast},
\end{equation}
where for $u,v\in\R,\ p_{\gamma}(u,v) = (1-\gamma)u+\gamma v \in\R$; $\pi^{\ast}$ is an optimal transport plan in eq.~\eqref{def:OT} between $s_1$ and $s_2$; and $\#$ is the pushforward operator. 
In other words, for any continuous function $h:\R\to\R$,
$$\int_{\R}h(u)\td g_{\gamma}(u) = \iint_{\R^2}h((1-\gamma)u+\gamma v)\td\pi^{\ast}(u,v),$$
and for each $\gamma\in [0,1], g_{\gamma}$ is a valid NPSD,
\item[(iii)] map $(g_{\gamma})_{\gamma\in [0,1]}$ back into the time domain to obtain the  time-series  interpolants $(x_{\gamma})_{\gamma\in [0,1]}$ between $x_1$ and $x_2$.
\end{enumerate}

Recall that the WF distance is a non-invertible embedding of time-series due to 
both the normalisation and the neglected phase. 
However, as we are interested in the harmonic content of the interpolant, we can equip the above procedure with a unit magnitude and either a fixed, random or zero phase depending on the application. In the case of audio signals, \cite{henderson2019audio} has proposed an interpolation path using the same rationale as described above and implemented it to create a portamento-like transition between the audio signals, where phase accumulation techniques are used to handle temporal incoherence.

In view of the proposed interpolation path, one can rightfully think of Dynamic Time Warping (DTW), a  widespread distance for time series \cite{bellman2015adaptive}, which consists in warping the trajectories in a non-linear form to match a target time series. As for the relationship  between WF and DTW, observe that  (i) the WF distance is rooted in frequency, whereas DTW is rooted in time, and (ii) though DTW does not lean on the Wasserstein distance, the authors of \cite{cuturi2017soft} proposed a Soft-DTW, using OT as a cost function, but again, purely in the time domain.

\subsection{Example: trajectories between parametric signals} 
\label{sub:sdeterministic_signals}

For illustration, we consider parametric (and deterministic) time series for which the proposed geodesic WF interpolants can be calculated analytically; this is due to their NPSDs and optimal transport plans being either known or straightforward to compute. For completeness, we focus on two complex-valued examples, yet they can be turned into real-valued ones with the appropriate choice of parameters. 

\textbf{Sinusoids.} Let us consider signals (of time) of the form 
\begin{equation}
  x_{a,b}(t)=e^{jat}+e^{jbt},  
\end{equation}
 with NPSD $s_{a,b}(\xi) =\tfrac{1}{2}(\delta(\xi-\frac{a}{2\pi})+ \delta(\xi-\frac{b}{2\pi}))$. For two signals $x_{a_1,b_1}$ and $x_{a_2,b_2}$  such that $a_1\leq a_2 <b_1\leq b_2$, we have $\WF{x_{a_1,b_1}}{x_{a_2,b_2}} =\sqrt{(a_1-a_2)^2+(b_1-b_2)^2}/(2\sqrt{2}\pi)$. Moreover, the interpolant between $x_{a_1,b_1}$ and $x_{a_2,b_2}$ for $\gamma\in [0,1]$ is given by 
\begin{align*}
x_{\gamma}(t) & =e^{j(\gamma a_1+(1-\gamma) a_2)t}+e^{j(\gamma b_1+(1-\gamma) b_2)t}\\
& =x_{\gamma a_1+(1-\gamma) a_2,\gamma b_1+(1-\gamma) b_2}.
\end{align*}
This example reveals a key feature of the WF distance: the WF interpolator \emph{moves} the energy across frequencies, whereas {vertical} distances perform a weighted average of the energy distribution, in the sense that computing the distance involves the integral of vertical displacements between the graphs of energy distribution. As a consequence, the WF interpolator is \emph{sparse} in frequency: the WF-interpolator between two line spectrum signals is also line spectrum.

\textbf{Exponentials.} Let us consider the following time series and its normalised power spectrum (which is Gaussian) respectively: 
\begin{align}  
x(t) &= e^{-\alpha t^2} e^{j2\pi t \mu}\label{eq:comp_exp}\\
S(\xi)  &= \NN(\mu,\alpha/4\pi^2),
\end{align}
where $(\mu,\alpha)\in\R\times\R_+^{\ast}$ are parameters. The Wasserstein distance and the geodesic between two Gaussians are known, in fact, the interpolant in this case is also a Gaussian \cite{takatsu2011wasserstein}. Therefore, the WF distance and interpolant between two complex exponentials $x_1$ and $x_2$ with parameters $(\mu_1,\alpha_1)$ and $(\mu_2,\alpha_2)$ are respectively given by:
\begin{align*}
\WF{x_1}{x_2} &= \sqrt{(\mu_1-\mu_2)^2+(\sqrt{\alpha_1} - \sqrt{\alpha_2})^2/4\pi^2},\\
x_{\gamma}(t)&=e^{-[\gamma\sqrt{\alpha_1}+(1-\gamma)\sqrt{\alpha_2}\ ]^2t^2} e^{j2\pi t (\gamma\mu_1 +(1- \gamma) \mu_2)}.
\end{align*}
Again, we see that the proposed interpolant preserves the nature of the signals: the WF-interpolation between complex exponentials of the form in eq.~\eqref{eq:comp_exp} is also a complex exponential. Fig.~\ref{fig:gauss_path} shows the interpolation between two complex exponentials via 6 evenly-spaced (according to WF) time series along the geodesic path; only the real part is shown for clarity.
\begin{figure}[!ht]
\centering
\hspace{-1.3em}\includegraphics[width=0.2\textwidth]{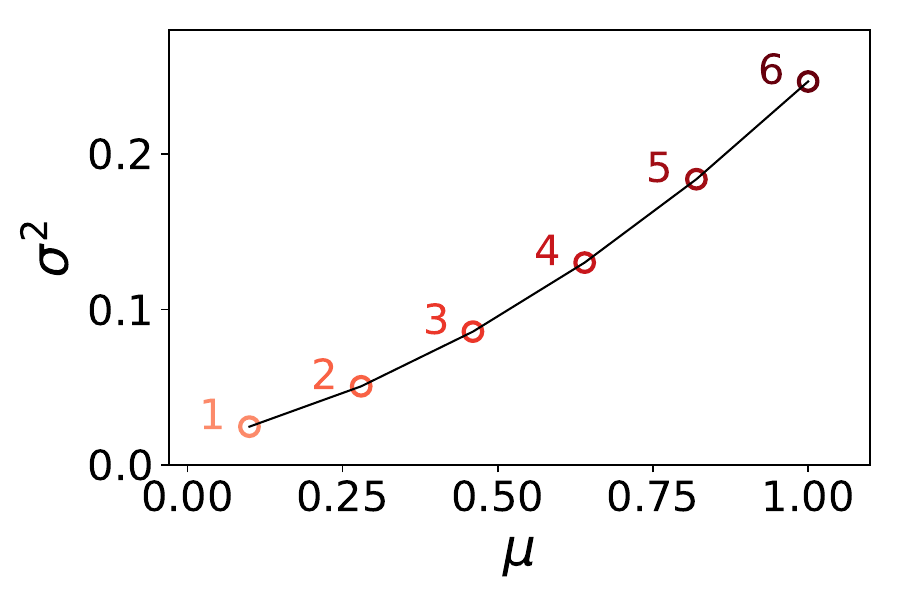}\\
\includegraphics[width=0.48\textwidth]{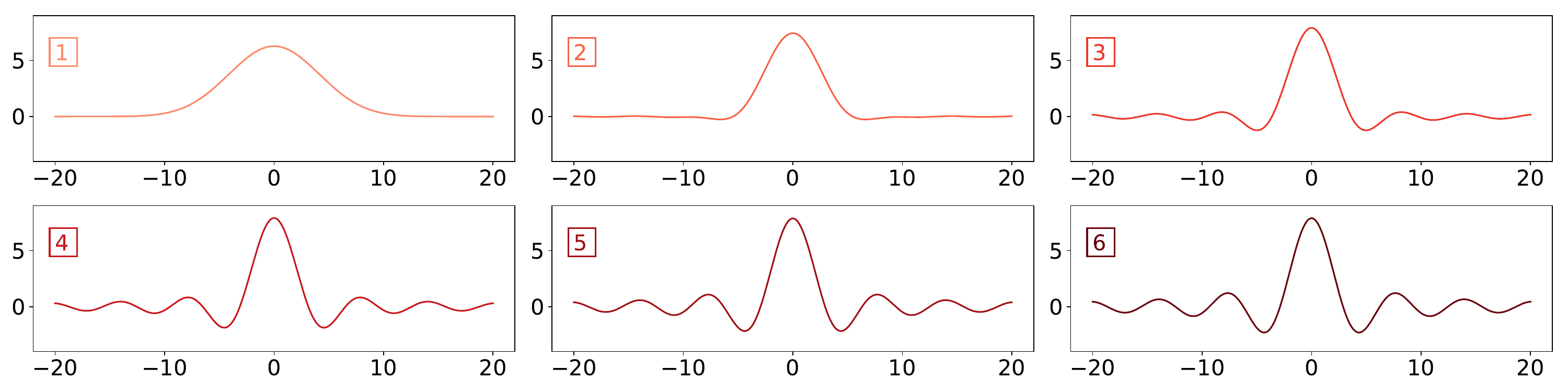}
  \caption{\textbf{Top:} The geodesic path between Gaussian NPSDs in the parameter space where $\sigma^2 = \frac{\alpha}{4\pi^2}$ is the variance and $\mu$ is the mean. The initial NPSD has parameters $\sigma^2_1=0.025$ and $\mu_1=0.1$ and the final NPSD has parameters $\sigma^2_2=0.25$ and $\mu_2=1$. \textbf{Bottom:} Time-domain signals (real part) corresponding to each of the 6 PSDs in the geodesic path.}
  \label{fig:gauss_path}
\end{figure}
 
\subsection{Example: data augmentation for the  {\em C. elegans} dataset}

The proposed interpolation method can be used for data augmentation by computing synthetic time series along the WF-interpolation path between two (real-world) signals. We next illustrate this property with the \emph{worm dataset} \cite{UCRTime}, also referred to as {\em C. elegans} behavioural database \cite{bhatla2009wormweb}. Since the movements of {\em C. elegans} can be represented by combinations of four base shapes (or eigenworms) \cite{brown2013dictionary}, a worm movement of the present dataset can be decomposed into these bases.
We will only consider the class of wildtype worms on the first base shape, for which the one-dimensional time series are 900-sample long. Our aim is to construct a WF-interpolation path between the movements of two worms belonging to the same class to synthetically generate more signals having the same spectral properties. 

Recall that to compute the time series from the interpolated NPSDs we need the phase, for this experiment we will interpolate the phases too in an Euclidean way: for two phases $\phi_1$ and $\phi_2$ we choose the interpolator  $\phi_{\gamma} := \gamma(\phi_1 \operatorname{mod} 2\pi)+(1-\gamma)(\phi_2\operatorname{mod} 2\pi)$. Furthermore, we will consider an evenly-spaced frequency grid of $1001$ points between $-40$ and $40$. Fig.~\ref{fig:worms_path} shows a 10-step interpolation path between two {\em C. elegans} series, using the WF (top) and the Euclidean (bottom) distances.

\begin{figure}[!ht]
\centering
\includegraphics[height=12em]{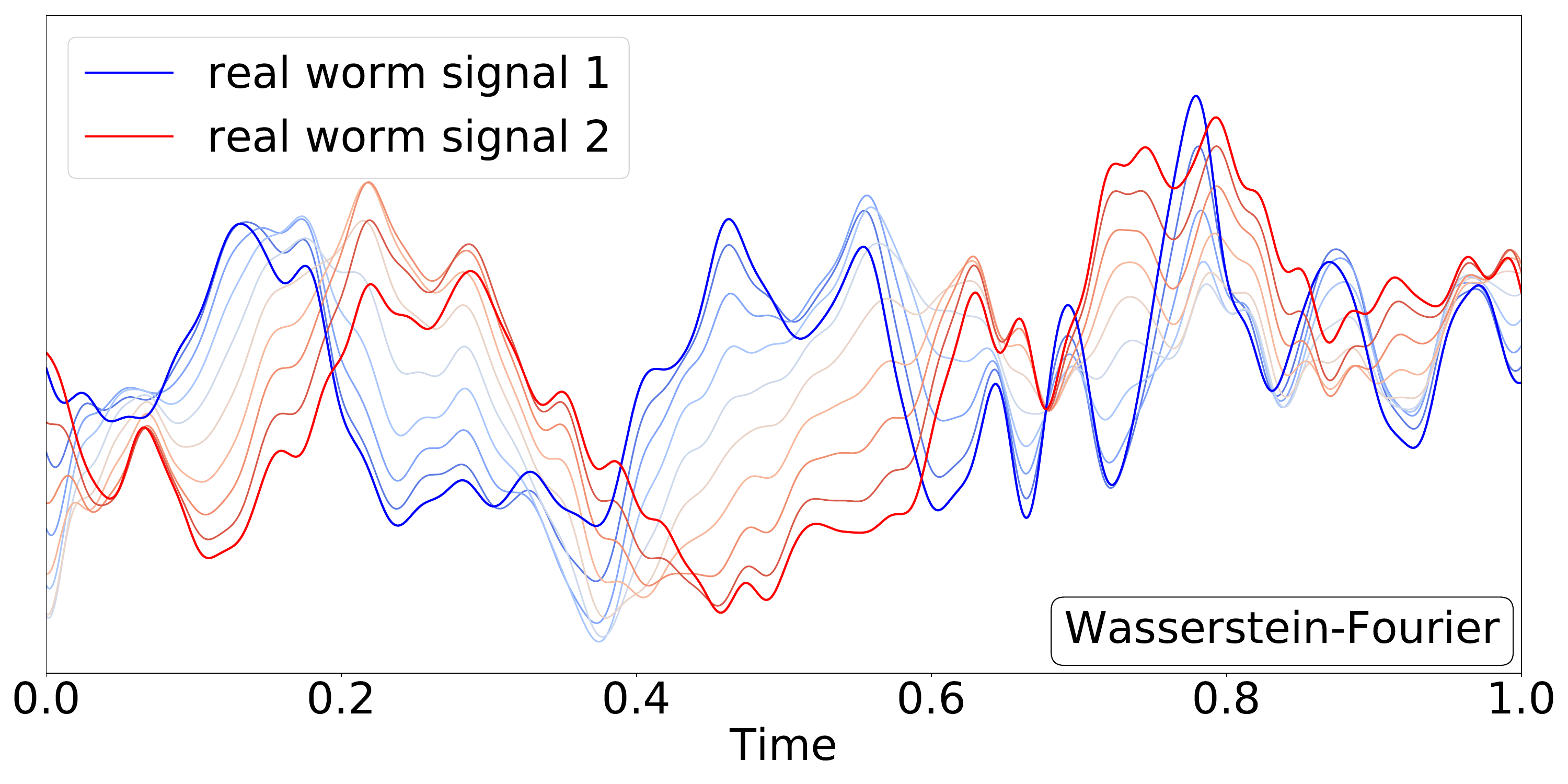}
\includegraphics[height=12em]{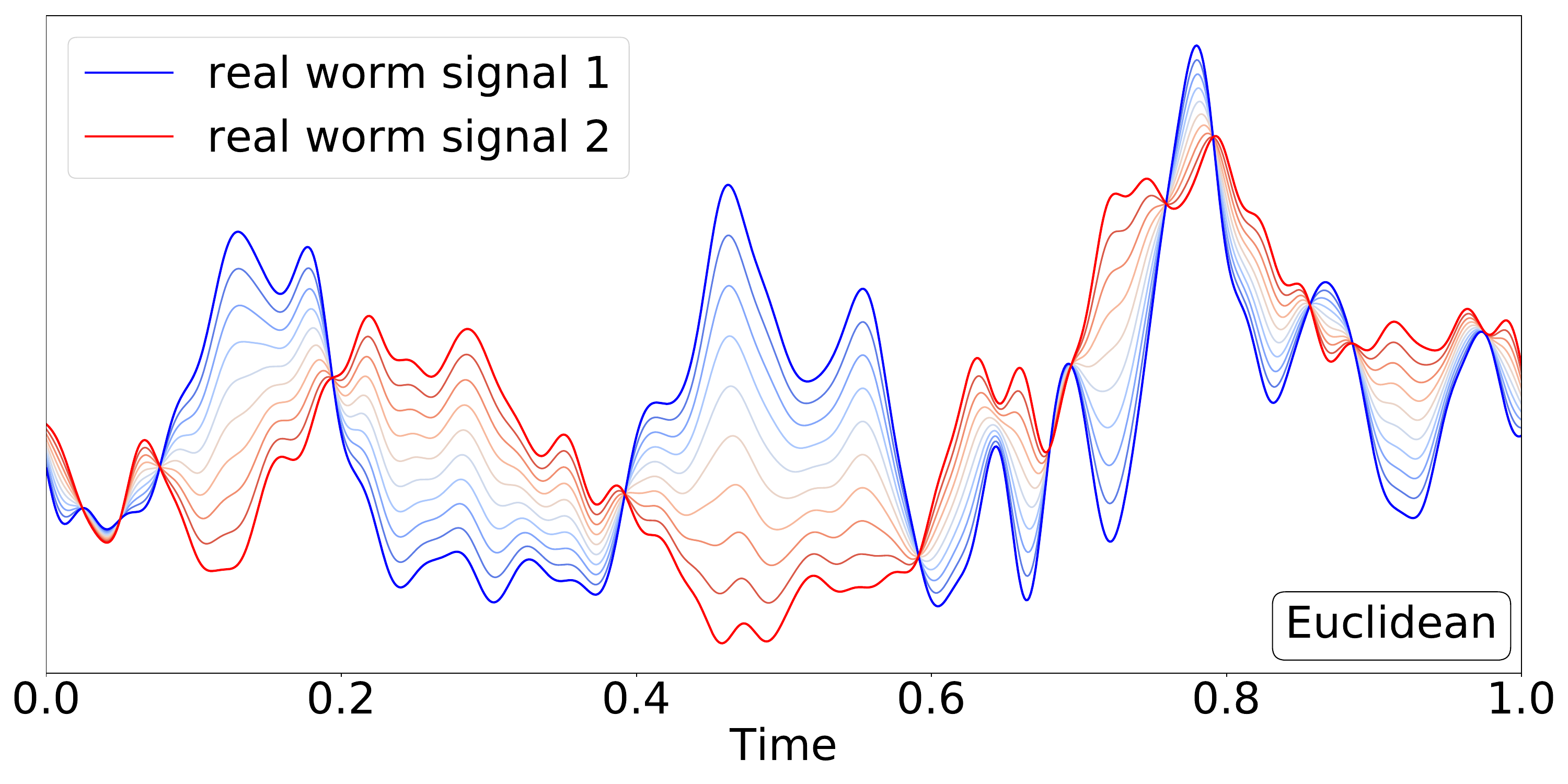}
  \caption{10-step interpolation $(x_{\gamma})_{\gamma\in [0,1]}$ between two signals from the {\em C.~elegans} database using the proposed WF distance (top) and the Euclidean distance (bottom): the true signals are shown in solid blue and red, while the interpolations are colour-coded with respect to $\gamma$.}
  \label{fig:worms_path}
\end{figure}

\begin{remark}
Observe that the WF interpolation is not constrained to remain in the element-wise convex hull spanned by the two series. On the contrary, under the Euclidean distance if the two series coincide in time and value, then all the interpolations are constrained to pass through that value, e.g., in Fig.~\ref{fig:worms_path} (bottom) for $t=0.2$ and many other time stamps.
\end{remark}

Additionally, to assess how representative our interpolation is in the class of wildtype worms, let us refer to the distance between a NPSD $s$ and a class $C$ as 
\begin{equation}
   d(s,C) = \min_{s'\in C} d(s,s'), \label{eq:class_dist}
 \end{equation}
and verify that the Wasserstein interpolant is closer to the wildtype class NPSDs than its Euclidean counterpart. Indeed, Fig.~\ref{fig:worms_path_behavior} shows our WF-interpolant (for $\gamma=0.55$) and the signal $s'$ minimising the above distance \eqref{eq:class_dist} for comparison. Besides the visual similarity, notice from Table \ref{tab:distances} that the WF interpolant is much closer to the wildtype class (using both the WF and Euclidean distances) than the Euclidean interpolant. This notion of class proximity for time series based on NPSDs opens new avenues for time series classification as we will see in Sec.~\ref{sec:log_reg}.

\begin{figure}[!ht]
\centering
\includegraphics[height=12em]{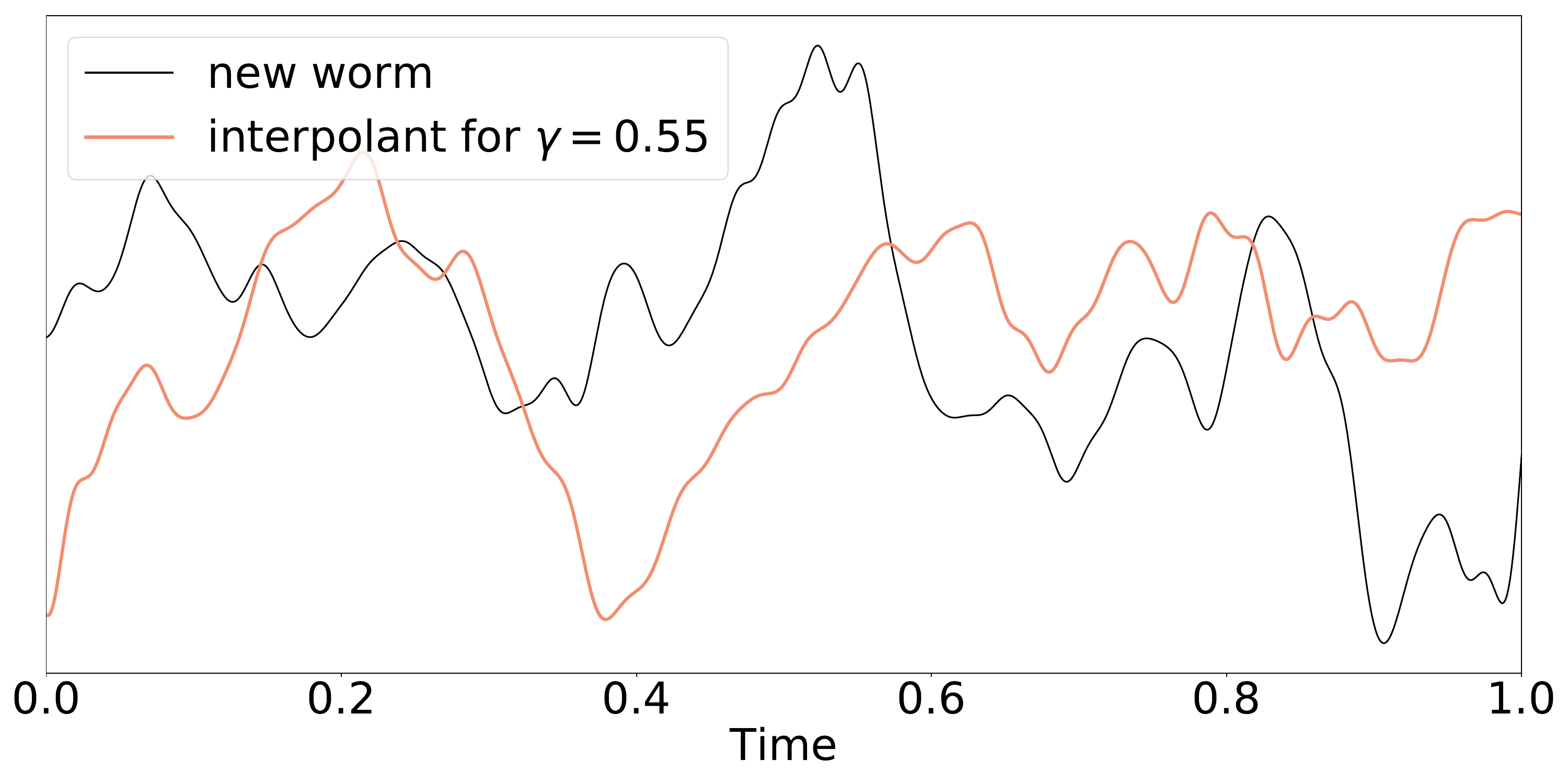}
  \caption{WF interpolant and minimiser of eq.~\eqref{eq:class_dist}: A wildtype worm signal unseen before the interpolation (black) and the proposed WF interpolant for $\gamma=0.55$ (light red).}
  \label{fig:worms_path_behavior}
\end{figure}

\begin{table}[!ht]
\centering
\caption{Class distances (in frequency domain) as defined in eq.~\eqref{eq:class_dist} for the WF-interpolant (left) and the Euclidean interpolant (right).}
\small
\begin{tabular}{p{0.5cm}cc}\toprule
               &  WF interpolant          &  Euclidean interpolant        \\ \toprule
$\L_2$         & $\textbf{3.9587.1e-4}$  &  $3.9905.1e-4$ \\
$W_2$    & $\textbf{0.1135}$ &     $0.2431$
\end{tabular}

\label{tab:distances}
\end{table}

\subsection{Special case of Gaussian processes} 
\label{ssub:gps}

The Gaussian process (GP) prior is a non-parametric generative model for continuous-time signals \cite{Rasmussen:2006}, which is defined in terms of its covariance or, via Bochner theorem \eqref{eq:Wiener}, by its PSD. Following from the arguments in Sec.~\ref{sec:class}, we can identify an equivalence class of all the GPs that share the same covariance function (and hence PSD) up to a scaling constant. Identifying this equivalence class is key to extend the WF distance to GPs, which is in line with the state of the art on GPs regarding spectral covariances \cite{Wilson:2013,parra_tobar,CSM,tobar:nonparametric,tobar19,hensman2016variational,lazaro2010sparse,tobar19b}. 

The proposed interpolation is in fact a geodesic in the GP space (w.r.t. to the WF distance), where every point in the trajectory is a valid GP. The interpolation can be implemented in the space of covariance kernels simply by applying the inverse Fourier transform to the proposed WF interpolator on NPSDs, where the phase is no longer needed since covariances are even functions. In this way,   as the properties of a GP are determined by its kernel (e.g., periodicity, differentiability, long-term correlations, and band-limitedness), the WF interpolant is a transition in the space of GPs with these properties. Fig.~\ref{fig:gp_interpol} illustrates this interpolation for four different covariance kernels, notice how the interpolation results on smooth incorporation of frequency components.

\begin{figure}[!ht]
  \includegraphics[scale=0.55]{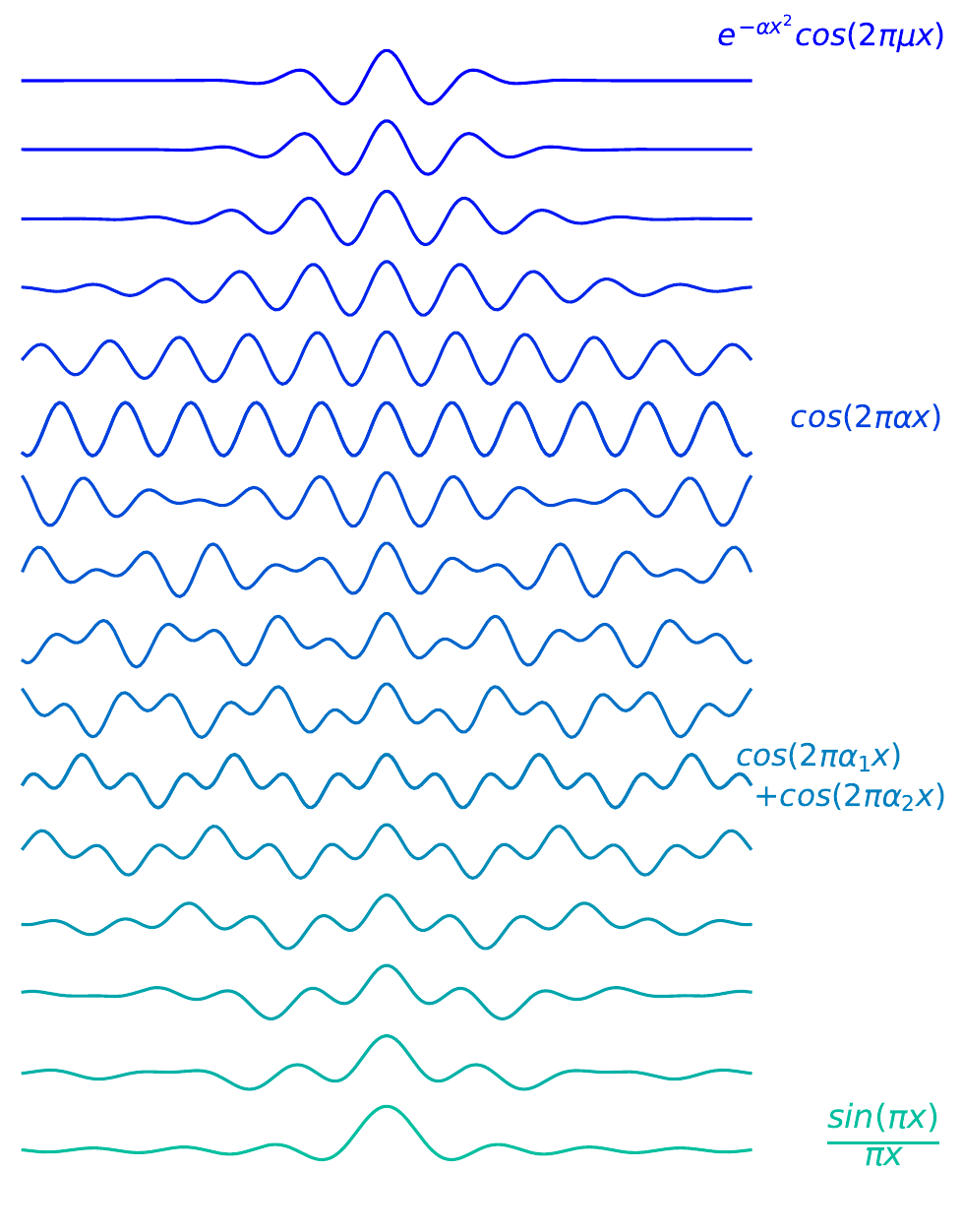}
  \caption{Interpolation of four covariance kernels colour coded: spectral mixture (blue, \cite{Wilson:2013}), cosine (violet), cosine mixture (turquoise), and sinc (green). Between two given kernels, we display $4$ interpolants for $\gamma\in\{0.2, 0.4, 0.6, 0.8\}$. Optimal transport plans $\pi^{\ast}$ were computed using POT library \cite{flamary2017pot}.}
  \label{fig:gp_interpol}
\end{figure}

\section{Principal geodesic analysis in WF}
\label{sec:PCA}

In order to detect spectral patterns across multiple time series, we can apply dimensionality reduction techniques to their respective NPSDs. Dimensionality reduction allows for analysing the main modes of variability of a dataset by basically finding a sequence of subspaces minimising the sum---over the data---of the norms of projection residuals. Though the \emph{de facto} method for dimensionality reduction applied to functions is Functional Principal Component Analysis (FPCA) \cite{dauxois1982asymptotic}, applying it to NPSDs provides limited interpretation. This is because using the Euclidean distance (assumed by FPCA) directly on the NPSDs might result on principal components (PC) with negative values, which are not distributions and thus hinder interpretablity. 
To circumvent this challenge, the Wasserstein distance can be used to extend the classical FPCA, which takes place in the Hilbert space $\L_2(\R)$, to a Principal Geodesic Analysis (PGA), which takes place in the space of densities. In our particular setting, this will allow us to perform PGA for NPSDs in the Wasserstein space.

In a nutshell, performing PGA for probability distributions in the Wasserstein space corresponds to considering a linear space tangent to the Wasserstein one, and carrying out a standard PCA in this linear space. Then, the principal components are projected back to the Wasserstein space. This procedure is referred as Principal Geodesic Analysis (PGA). For the reader's convenience, a review of this approach is presented in Appendix \ref{appendix:PCA}.


\subsection{Implementation on NPSDs: synthetic and real-world data}
\label{sub:PGAimplementation}

The PGA procedure can be applied to the space of NPSD in order to yield a meaningful decomposition of the signals' main oscillatory modes and to summarise the information of the series into geodesic components. \CR{The algorithm to compute PGA in the Wasserstein-Fourier space is presented in Appendix \ref{appendix:PCA_algo}}. We illustrate this application via a toy example using a dataset composed of:
\begin{itemize}
\item Group 1: $25$ cosine signals with random frequency $f_i \sim \operatorname{unif}(1,5)$ and white Gaussian noise.
\item Group 2: $25$ sinc signals with random parameter $a_i  \sim \operatorname{unif}(1,5)$ and white Gaussian noise.
\end{itemize}
Fig.~\ref{fig:signal_cos_sinc} shows $3$ signals from each group. The cosines are colour-coded in red, while the sincs are shown in yellow. Then, Fig.~\ref{fig:cos_sinc_proj} shows the projection locations $t_{1,n}\in[-1.52, 2.51]$ of all $n=50$ NPSDs along the first geodesic component $g_1$. In particular, the locations of the projections $t_{1,n}$ allow for a linear separation of the groups 1 and 2 since they are respectively in the ranges $t_{1,n}\in[-0.36, 2.51]$ (red-coded, cosines), and $t_{1,n}\in[-1.52, -0.75]$ (yellow-coded, sinc), meaning that the group of a new sample can be identified by assessing if its first component projection is greater or less than, say, $\bar{t} = -0.5$.

\begin{figure}[!ht]
	\centering
		\hspace{-0.5cm}\includegraphics[height=13em]{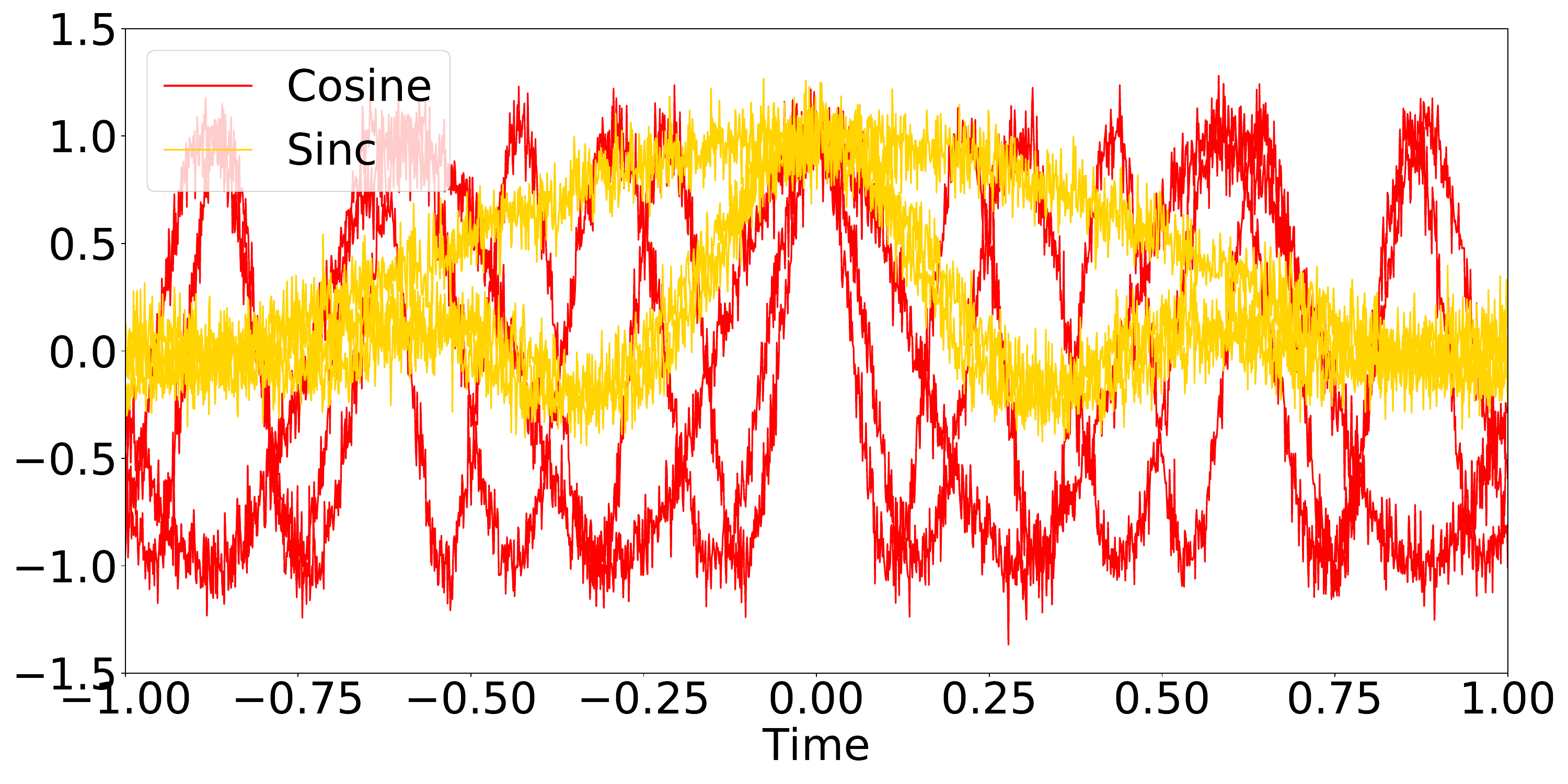}
	\caption{Subsamples of $3$ out of $25$ noisy cosine signals (red) (resp. noisy sinc signals (yellow)).}
	\label{fig:signal_cos_sinc}
\end{figure}

\begin{figure}[!ht]
	\centering
\includegraphics[height=8em]{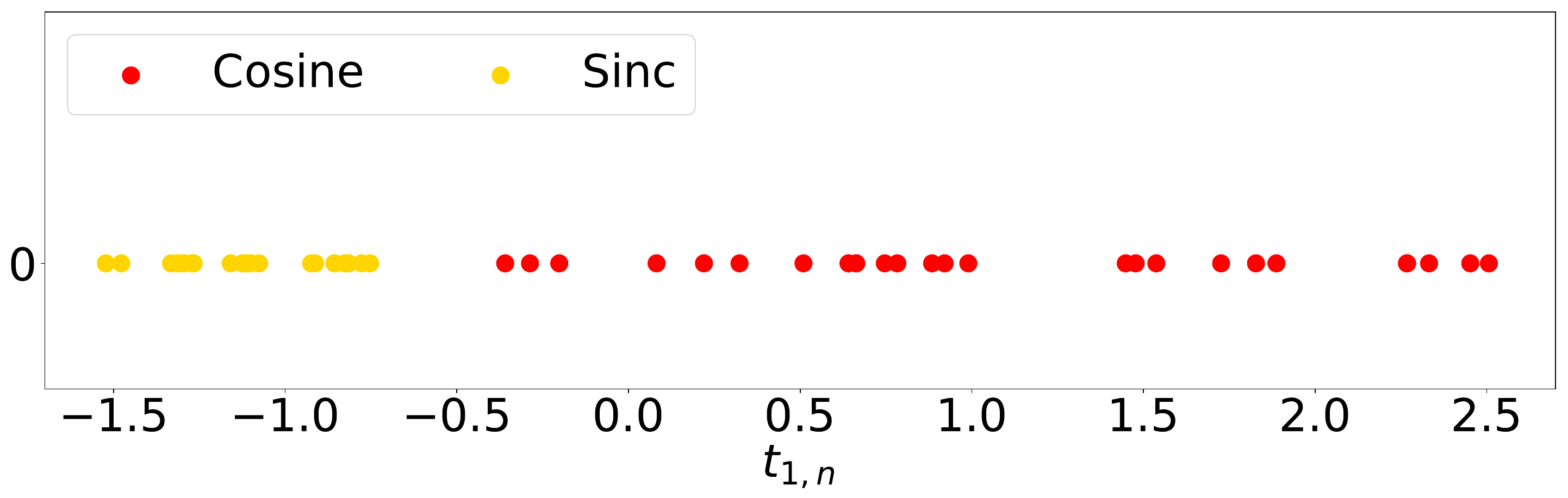}
	\caption{Values of the projection locations $(t_{1,n})_{n=1,\ldots,50}$ of the NPSDs onto the first geodesic component $g_1$ for the cosine (red) and sinc (yellow) signals. The groups can be linearly separated only looking at this projection.}
	\label{fig:cos_sinc_proj}
\end{figure}

To validate the usefulness of the proposed procedure, we also applied FPCA (i) directly on the signals, and (ii) on the NPSDs. In neither of these cases it was possible to discriminate between the sinc or cosine groups.

Lastly, we applied this decomposition procedure on the real-world \emph{fungi dataset} composed of one-dimensional melt curves of the rDNA internal transcribed spacer (ITS) region of $51$ strains of $18$ fungal species (\cite{lu2017dynamic, UCRTime}). Each fungi species contains between $7$ and $19$ signals, for a total of $189$ time series of length $201$. The dataset is displayed in Fig.~\ref{fig:fungies}, where the $18$ species are colour coded.

\begin{figure}[!ht]
	\centering
	\includegraphics[height=12em]{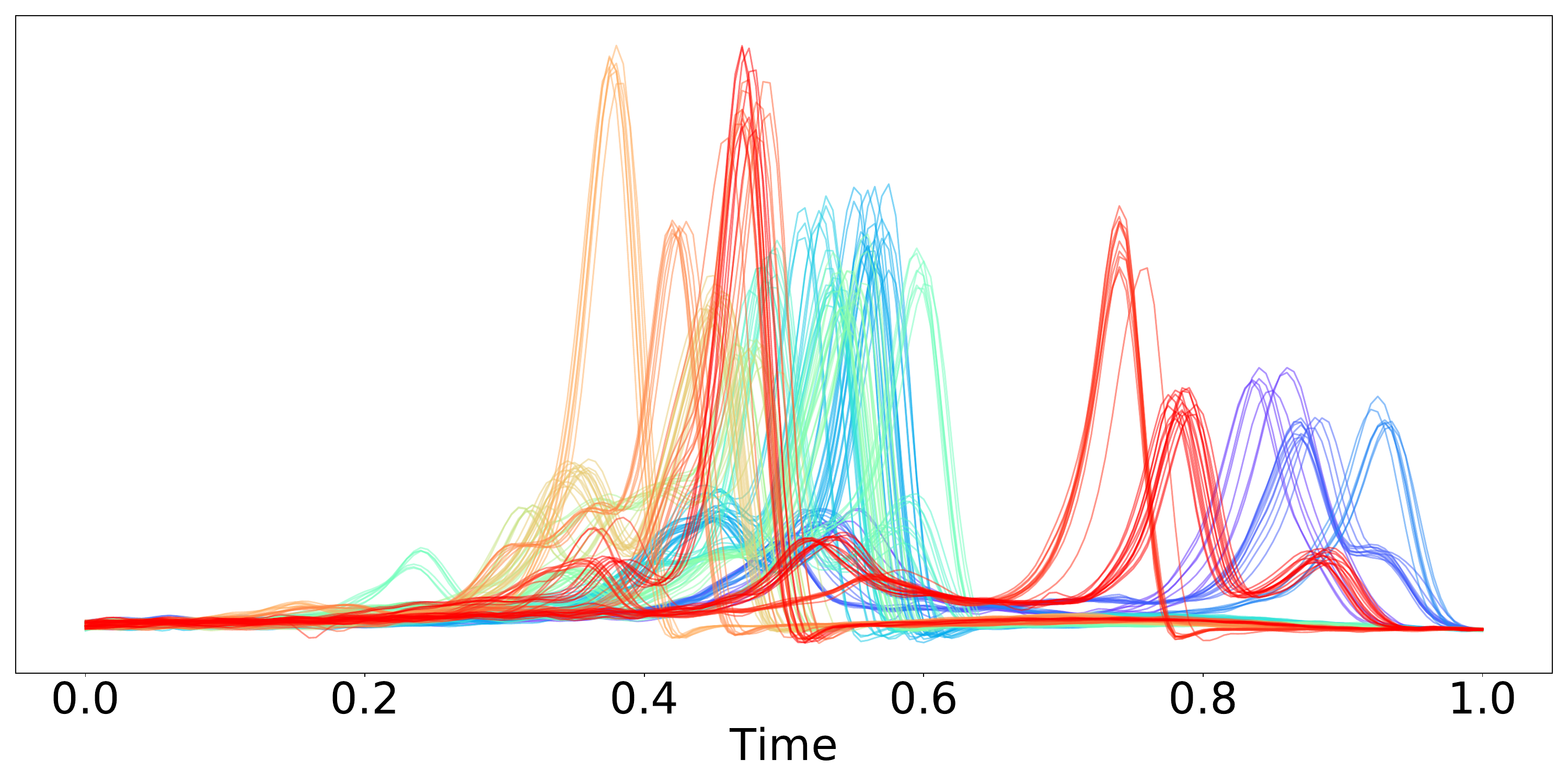}
	\caption{Melt curves for $18$ colour-coded species of fungies.}
	\label{fig:fungies}
\end{figure}

Fig.~\ref{fig:fungies_proj} shows the locations of projections  
$\{t_{1,n},t_{2,n}\}_{n=1}^{189}$ for the first and second components plotted against one another using the same colour-codes are the species in Fig.~\ref{fig:fungies}. Notice that some regions contain only projections of a single species while some species overlap (at least for the first two components), therefore, the proposed PGA might suggest that some groups have unique features isolated in different clusters while others share common properties.

\begin{figure}[!ht]
\centering
		\includegraphics[height=13em]{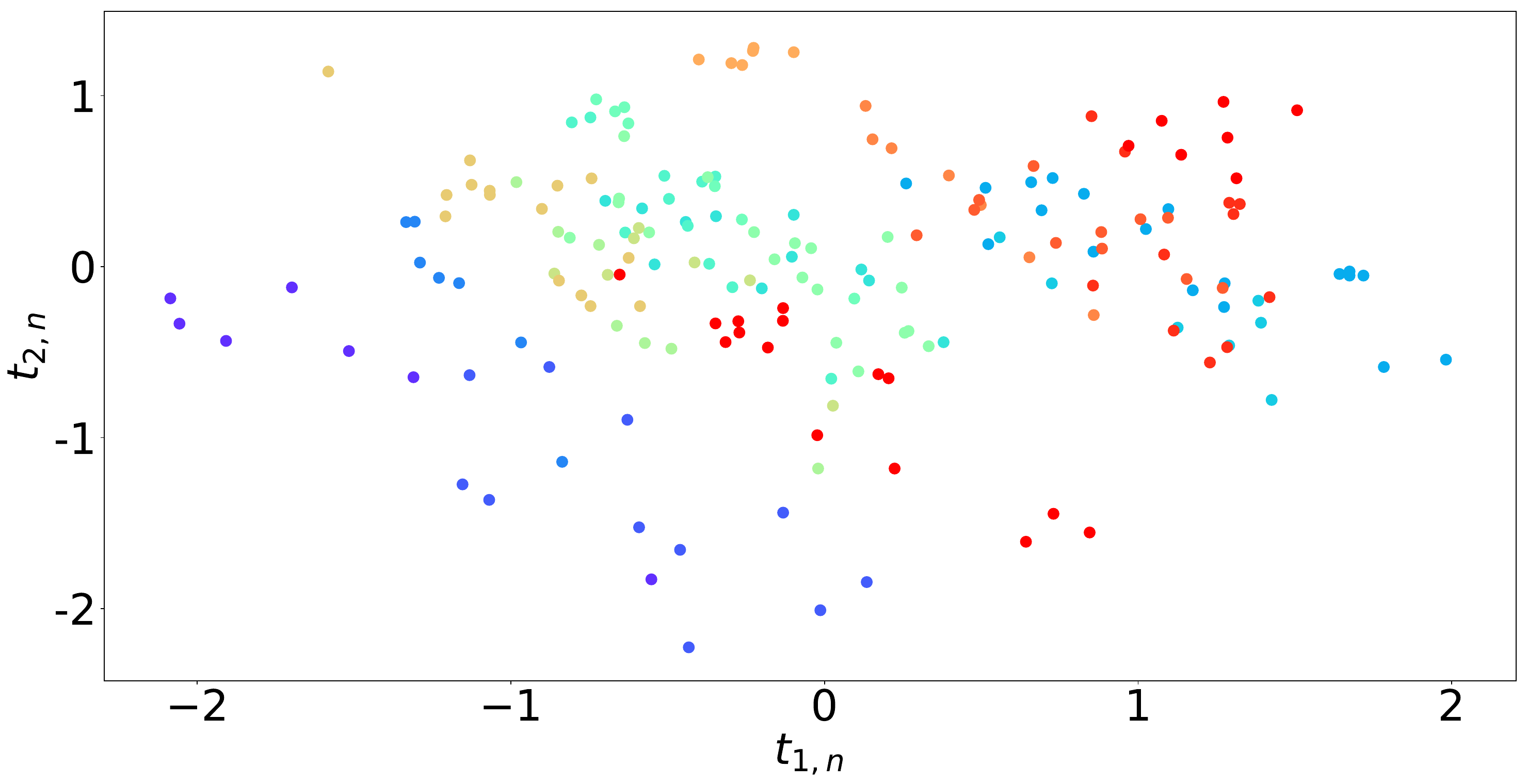}
	\caption{Values of the projection locations $(t_{1,n},t_{2,n})_{n=1,\ldots,189}$. The colours encode each fungies species, where it can be seen that some of them overlap and some do not.}
	\label{fig:fungies_proj}
\end{figure}


\section{Classification of time series using divergences}
\label{sec:log_reg}
  
This last section is dedicated to evaluating the proposed WF distance, as well as other divergences on PSDs, as a metric for classifying time series. Note that a similar idea for a regression model has been developed in the special case of biological data in \cite{martinez2016closed}. We start with a toy example motivating the use of the WF distance in the classification task, to then propose classifiers using four NPSD distances (WF, KL, IS and Euclidean) combined with the logistic function (parametric) and the $k$-nearest neighbours concept (non-parametric). Then, these classifiers are validated on binary classification using two real-world datasets.

\subsection{Motivation for spectrum-based classification}
 
Let us consider two classes of synthetic NPSDs given by 
\begin{itemize}
	\item \emph{Left-Asymetric Gaussian Mixture (L-AGM):} given by a sum of two Gaussians with random means and variances, where the left Gaussian has a variance four times that of the right one.
	\item \emph{Right-Asymmetric Gaussian Mixture (R-AGM):} constructed in the same manner but the role of the variances is reversed.
\end{itemize} 
In both classes the distance between the means remains constant. Using both the Wasserstein and Euclidean metrics Fig.~\ref{fig:log_toy} shows samples of each class, their means or \emph{barycenters}, and the distances between the samples and the means. 

\begin{figure}[!ht]
	\centering
	\includegraphics[width=0.48\textwidth]{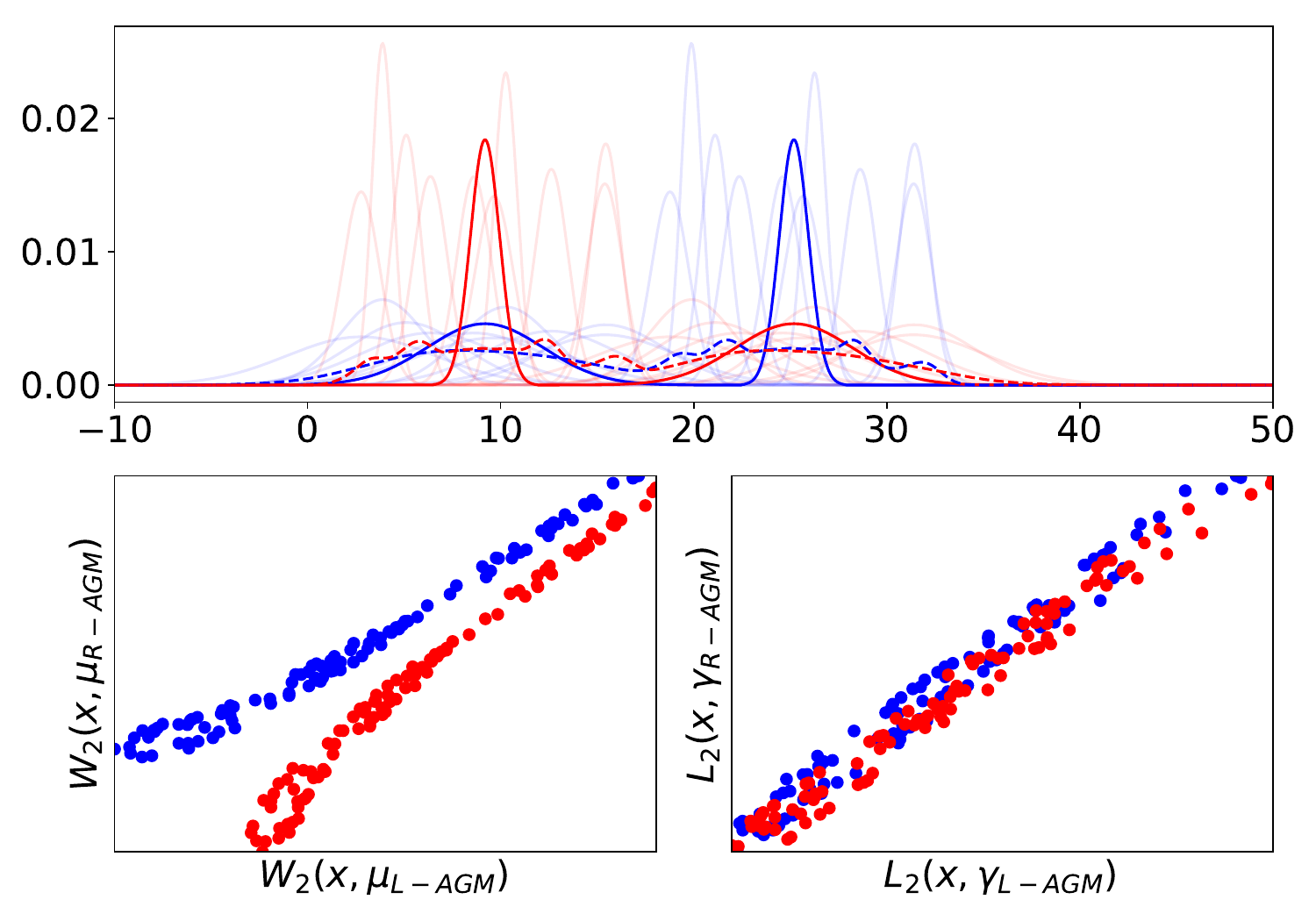}
	\caption[Toy example] 
	{Illustration of the linear separability of distributions using $W_2$. \textbf{Top}: $100$ samples from each class L-AGM (light blue) and R-AGM (light red),  with their Euclidean (dashed line) and Wasserstein barycenters (solid line). \textbf{Bottom}: Distance to the barycenter of R-AGM plotted against distance to the barycenter of L-AGM (left: Wasserstein, right: Euclidean), colour-coded by class. }
	\label{fig:log_toy}
\end{figure}

\begin{remark}\label{rem:}
Only by measuring the Wasserstein distance between a sample and the class barycenters, the sample can be appropriately assigned to the corresponding class. Using the Euclidean distance, on the contrary, does not provide the required separability.
\end{remark}

\subsection{Definition of spectral classifiers for time series } 
\label{sec:TS_classification}

We propose classifiers based on the logistic function and a $k$-nearest neighbours (KNN) classifier, both using different divergences for NPSDs as initially addressed by \cite{rakotomamonjy2018distance}.\\

\begin{table*}[t!] 
\centering
\small
\caption[labelsep = newline]{Logistic Classification: Accuracy and standard error for \textit{gun shot} against the 9 remaining classes of Urbansound8k. For each class, the best and second best performances are shown in boldface black and blue respectively for all KNN and logistic regression models. The last column shows the performance of $\mathcal{L}_{ALL}$ (a logistic regressor that combines $\L_2, \KL$ and $W_2$ features) which is highlighted in red when it outperformed all 4 other single-distance logistic regressors.}
\label{tab:urbansound_lr}
\scalebox{0.83}{
	\begin{tabular}{p{1.2cm}p{1.8cm}p{1.8cm}p{1.8cm}p{1.8cm}p{1.8cm}p{1.8cm}p{1.8cm}p{1.8cm}|p{1.8cm}} \toprule

		& $\kNN_{W_2}$ & $\kNN_{\L_2}$ & $\kNN_{\KL}$ & $\kNN_{IS}$ &  $\mathcal{L}_{W_2}$ & $\mathcal{L}_{\L_2}$ & $\mathcal{L}_{\KL}$ & $\mathcal{L}_{IS}$ & $\mathcal{L}_{ALL}$ \\ \midrule  

		AC &  $0.734 (\pm 0.065)$ & $0.634 (\pm 0.087)$ & $0.611 (\pm 0.087)$ & $\bb{0.769} (\pm 0.03)$ & $0.732 (\pm0.072)$ & $0.718 (\pm 0.047)$ & $0.650 (\pm0.090)$  &  $\textbf{0.795} (\pm 0.036)$ & $0.672 (\pm0.088)$ \\
 
		horn & $0.755 (\pm 0.046)$ & $0.783 (\pm 0.037)$ & $\textbf{0.809} (\pm 0.019)$ & $0.681 (\pm 0.042)$  & $0.588 (\pm0.077)$ & $0.743 (\pm0.043)$ & $\bb{0.790} (\pm0.037)$  &  $0.632 (\pm 0.073)$ & $\br{0.829} (\pm 0.031)$  \\
 
		children & $0.747 (\pm 0.023)$ & $0.6 (\pm 0.028)$ & $0.672 (\pm 0.014)$ & $0.708 (\pm 0.015)$  & $\bb{0.751} (\pm0.027)$ & $0.685 (\pm0.031)$ & $0.736 (\pm0.023)$ & $\textbf{0.756} (\pm 0.028)$ & $\br{0.769} (\pm 0.031)$ \\ \midrule

		dog bark & $0.721 (\pm 0.033)$ & $0.693 (\pm 0.026)$ & $0.658 (\pm 0.028)$ & $\textbf{0.746} (\pm 0.03)$ & $\bb{0.743} (\pm0.040)$ & $0.720 (\pm0.033)$ & $0.728 (\pm0.040)$ & $0.738 (\pm 0.044)$ & $0.724 (\pm 0.038)$  \\ 

		drilling & $0.82 (\pm 0.019)$ & $0.76 (\pm 0.029)$ & $0.747 (\pm 0.022)$ & $0.756 (\pm 0.02)$  & $\bb{0.827} (\pm0.027)$ & $0.826 (\pm0.026)$ & $0.817 (\pm0.026)$ & $\textbf{0.844} (\pm 0.023)$ & $0.836 (\pm0.020)$   \\ 

		EI & $\textbf{0.816} (\pm 0.029)$ & $0.774 (\pm 0.035)$ & $0.794 (\pm 0.032)$ & $0.791 (\pm 0.038)$  & $0.767 (\pm0.041)$ & $0.733 (\pm0.051)$ & $\bb{0.791} (\pm0.042)$ & $0.732 (\pm 0.05)$ & $0.791 (\pm0.045)$  \\ \midrule

		J & $\textbf{0.791} (\pm 0.044)$ & $0.617 (\pm 0.049)$ & $\bb{0.737} (\pm 0.049)$ & $0.687 (\pm 0.045)$  & $0.645 (\pm0.087)$ & $0.585 (\pm0.095)$ & $0.669 (\pm0.059)$ & $0.388 (\pm 0.099)$ & $ \br{0.709} (\pm0.06)$  \\

		siren & $\textbf{0.924} (\pm 0.025)$ & $0.868 (\pm 0.038)$ & $0.883 (\pm 0.038)$ & $0.78 (\pm0.054)$  & $0.803 (\pm0.062)$ & $0.878 (\pm0.034)$ & $\bb{0.897} (\pm0.034)$ & $0.454 (\pm 0.078)$ & $\br{0.913} (\pm0.027)$  \\
 
		music & $\textbf{0.819} (\pm 0.021)$ & $0.769 (\pm 0.017)$ & $0.763 (\pm 0.023)$ & $\bb{0.814} (\pm0.028)$ & $0.792 (\pm0.030)$ & $0.782 (\pm0.025)$ & $0.812 (\pm0.029)$ &  $0.284 (\pm 0.67)$ & $ \br{0.845} (\pm0.028)$  \\ \bottomrule

\end{tabular}
}
\vspace{1em}
\end{table*}

\noindent\textbf{Logistic classification for NPSDs.} In binary classification (i.e., classes  $C_0$ and $C_1$), we can parametrise the conditional probability that the sample $s$ is class $C_1$ using the logistic function:
\begin{equation}
\label{eq:reparametrisation}
p(C_0\vert s) = \frac{1}{1+e^{-\alpha+\beta d(s,\bar{s}_0)- \gamma d(s,\bar{s}_1)}},
\end{equation}
where the function $d(\cdot, \cdot)$ is a divergence and $\bar{s}_0$ (resp. $\bar{s}_1$) is known as the \emph{prototype} of the class $C_0$ (resp. $C_1$). For a set of sample-label observation pairs $\{(s_i,C_i)\}_{i=1}^N$, the set of parameters $[\alpha, \beta, \gamma]$ can be obtained via, e.g., maximum likelihood. Notice that, in connection with the standard (linear) logistic regression, the argument in the exponential in eq.~\eqref{eq:reparametrisation} is a linear combination of the distances to class prototypes; therefore, relative proximity (on distance $d$) to the prototypes determines the class probabilities in a linear fashion.  

As the sample $s$ in our setting is an NPSD, we consider four alternatives for $d$ and $\{\bar{s}_k\}_{k=0,1}$ in eq.~\eqref{eq:reparametrisation}, thus yielding four different classifiers:
\begin{enumerate}
\item[(1)] Model $\LL_{\L_2}$ : $d := \L_2$ the Euclidean distance, and $\bar{s}_k :=\frac{1}{\# C_k} \sum_{i \in C_k} x_i$, for $k \in \{0,1\}$, is given as the Euclidean mean of class $k$,
\item[(2)] Model $\LL_{\KL}$ : $d := \KL$ the Kullback-Leibler distance, and $\bar{s}_k$ is also given as the Euclidean mean of class $k$,
\item[(3)] Model $\LL_{\IS}$ : $d := \IS$ the Itakura-Saito distance, and $\bar{s}_k$ is also given as the Euclidean mean of class $k$,
\item[(4)] Model $\LL_{W_2}$ : $d := W_2$ the Wasserstein distance, and $\bar{s}_k$ is given as the Wasserstein mean \eqref{def:barycenter} of class $k$.
\end{enumerate}

Also, as the KL and IS divergences are not symmetric, we clarify its use within models $\LL_{\KL}$ and  $\LL_{\IS}$ via the following remark.
\begin{remark}
The support of the Euclidean barycenter $\bar{s}$ of a family of distributions $\{s_i\}_i$ contains the support of each  member $s_i$. Therefore, the KL (resp. IS) divergence computed in the following direction $\KL(s\Vert \bar{s})$ (resp. $\IS(s\Vert \bar{s})$) where $s$ is any NPSD, is always finite and thus to be considered within model $\LL_{\KL}$ (resp. $\LL_{\IS}$).
\end{remark}

\noindent\textbf{KNN classification for NPSDs.} We also consider a variant of the KNN classifier, using the aforementioned distances on NPSDs.
Recall that the KNN algorithm is a non-parametric classifier \cite{weinberger2009distance, andoni2018approximate}, where sample $s$ is assigned to the class most common across its $k$ nearest neighbours. 
This strategy heavily relies on the distance (or divergence) used, where the predicted class follows directly from the distances among  samples.
Therefore, we implement the KNN concept to NPSDs using the above metrics thus yielding the classifiers $\kNN_{W_2}$, $\kNN_{\KL}$, $\kNN_{\IS}$ and $\kNN_{W_2}$. 
We cross-validate (on 10 folds) $k \in \{2, \cdots, 10 \}$ for each task and model. Results reported were obtained with the best performing $k$ for each classifier.

\subsection{Binary classification of two real-world datasets}
\label{sec:binary_real}

We now perform binary classification using the logistic and KNN models with the four distances (8 models in total) applied to two real-world datasets: Urbansound8k and GunPoint.

\textbf{Urban audio}. Urbansound8k \cite{salamon2014dataset} comprises 4-second recordings of the classes: \emph{air conditioner (AC), car horn, children playing, dog bark, drilling, engine idling (EI), gunshot, jackhammer (J), siren}, and \emph{street music} (see Table \ref{table:urban} in Appendix \ref{appendix:dataset} for number of samples). We considered foreground-target recordings at 44100 Hz, that is, a dataset of 3487 one-dimensional labelled NPSDs\footnote{PSDs were computed using the periodogram.}. We focused on binary classification related to discriminating class \textit{gun shot} from the remaining classes, thus having 9 different tasks.

We trained $\LL_W$, $\LL_{\L_2}$, $\LL_{\IS}$ and $\LL_{\KL}$ via maximum likelihood\footnote{The negative-log-likelihood was minimised using the L-BFGS-B algorithm.} and cross-validated using the 10-fold as recommended by the dataset. Table \ref{tab:urbansound_lr} shows the mean accuracy of all 8 models and tasks with their 10-fold standard error, with the best and second-to-best performing methods shown in boldface black and blue respectively. Out of the nine tasks, WF is the top performer metric (through KNN) four times, and second-best performer metric (through logistic reg.) three times. Furthermore, in the three tasks where WF performed second best, its performance is well within the error bars of the top performer. Another conclusion that can be drawn is that WF seems to performs better on KNN, while IS performed better on the logistic reg. 
\CR{We completed our analysis with a logistic regressor that combines the features of $\L_2, \KL$ and $W_2$ by considering a linear combination of all three distances, referred to as $\mathcal{L}_{ALL}$. This aggregated model outperformed other logistic regressors in 5 out of 9 tasks, providing an accuracy improvement of up to 0.04. This supports the idea that different features conveyed by the considered metrics can be successfully combined.}

\textbf{Body-motion signals}. The Gunpoint dataset \cite{ratanamahatana2004everything} contains hand trajectories of length $150$ from \emph{young} and \emph{old} actors \emph{drawing} a gun or just \emph{pointing} at a target with his finger. Therefore, the dataset naturally comprises two pairs of non-overlapping classes: \emph{young} VS \emph{old}, and \emph{drawing} VS \emph{pointing}. See Table \ref{table:motion} in Appendix \ref{appendix:dataset} for number of samples.

For each pair of classes we performed binary classification using $10$ random splits of the dataset, using $80\%$ as training data and $20\%$ as test data. Figure \ref{fig:gunpoint_perf} shows the performance in the form of box plots. Notice, for both tasks, that the top performing methods are $\kNN_{\KL}$ and $\kNN_{W_2}$ with tight error bars. For the particular case of the logistic reg.~methods, $\LL_{W_2}$ appears to be the best (\emph{draw} VS \emph{point}) and second best (\emph{young} VS \emph{old}) performing option.
Regarding computation time, we emphasise that all distances considered belong to the same complexity class (up to a constant factor). 
This constant factor is relatively small (on the draw VS point task) in the logistic regression framework: $\LL_{W_2}$ took 0.23 seconds to complete training in average while competing methods took 0.09 seconds.
The impact of the constant factor is more visible in the $\kNN$ framework where WF took in average 3.7 seconds while other methods around 0.1 seconds, for the average training time.

\begin{figure}[!ht]
	\includegraphics[width=\linewidth]{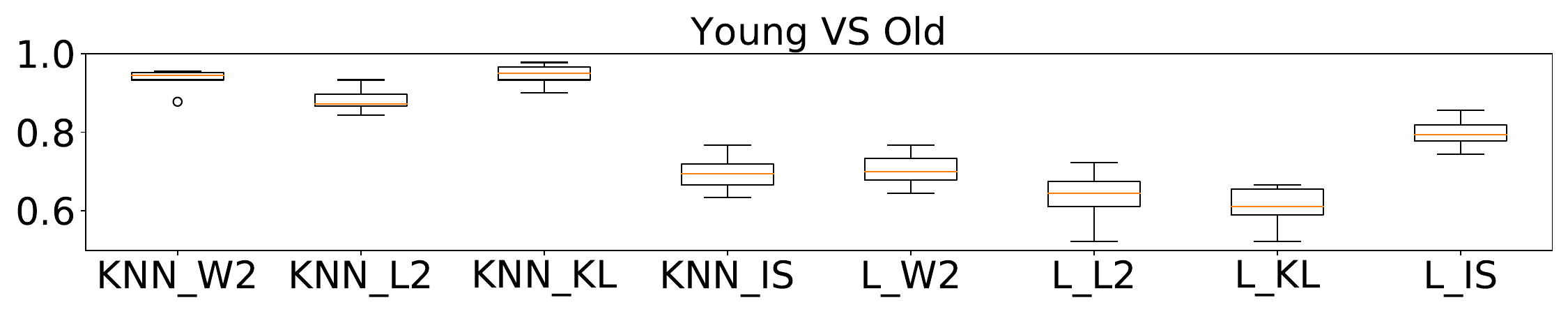}
	\includegraphics[width=\linewidth]{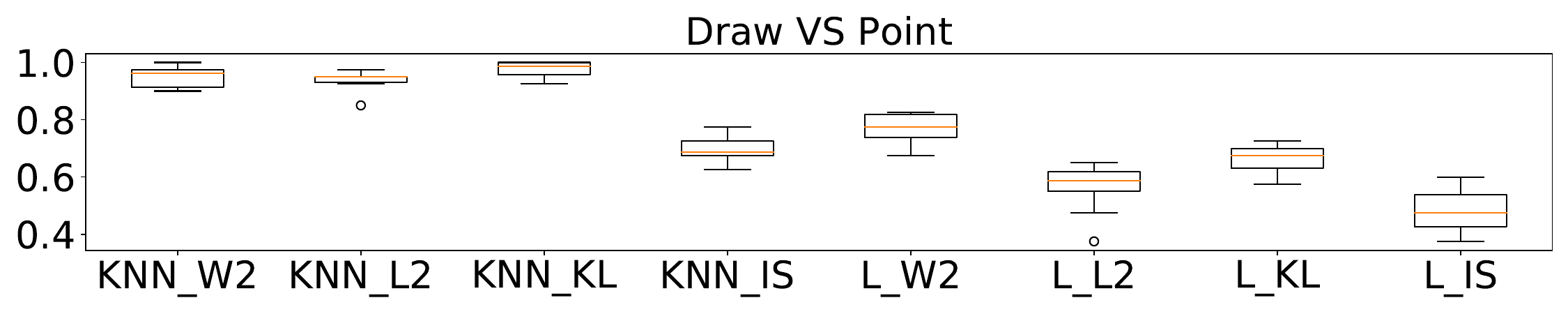}
	
	\caption{Classification of Gunpoint pairs: young vs old (top) and draw vs point (bottom). Accuracy and standard error using 10 different train-test ($80-20$) splits.}
	\label{fig:gunpoint_perf}
\end{figure}


\section{Concluding remarks}
We have proposed the Wasserstein-Fourier based distance (WF) to study stationary time series. The WF distance has been presented in (i) theoretical terms, (ii) connection with consistency of the empirical autocorrelation function, (iii) data interpolation and augmentation, (iv) dimensionality reduction, and (v) time series classification. These results establish WF as a sound and intuitive metric for modern time series analysis, and thus working towards bridging the gap between the computational optimal transport and signal processing communities. The proposed WF distance is, to the best of our understanding, unique within the literature in the following aspects: 
\begin{itemize}
 	\item It provides a \emph{horizontal} assessment of spectral content, thus allowing to explicitly quantify the displacement of the power spectrum (frequency shift or warping). For instance, the WF distance between cosines of frequencies $\omega_1$ and $\omega_2$ is simply $|\omega_1-\omega_2|$.
 	\item It is able to compare PSDs of different support, critically, it can compare continuous PSDs against discrete ones. For instance, it can measure the distance between a sinc function and a sum of cosines.
 	\item It is a proper distance (unlike KL or IS) fulfilling the three axioms of a distance, and it is also meaningful from a spectral analysis perspective (unlike the Euclidean distance). This allows us translate existing distance-based methods directly to the space of time series such as dimensionality reduction, regression, classification, clustering, synthesis, or data augmentation.
 \end{itemize} 
In terms of computational overhead, though all distances considered are in the same order of complexity, we acknowledge that WF exhibited computation times larger than its competitors. This, however, is not detrimental for the implementation of the proposed distance, since i) WF has unique properties such as those mentioned above and therefore greater computational complexity is expected, and ii) operation on real-world examples with hundreds of time series still remained in the orders of second, both for training and prediction. Lastly, the computation of the WF distance can still be accelerated based on current developments for fast computations of the Wasserstein distance (e.g. \cite{sinkhorn_lightspeed}).

 Future research includes exploiting the property of geodesic path between GPs, briefly illustrated in this paper, build new training procedures for Gaussian processes, as well as using optimal transport for unbalanced distributions \cite{chizat2018interpolating}, and therefore directly on the space of PSDs without the need for normalisation.


\section{Appendix} 
\label{sec:appendix}

\subsection{Non-convexity example of the WF distance}
\label{appendix:nonconv}

The last equality of eq.~\eqref{eq:nonconv} in Sec.~\ref{sec:properties}, which serves as a counterexample for the convexity of the proposed WF distance, follows from   
\begin{align*}
& \hat{x}(\xi) = \delta(\xi-\frac{\omega}{2\pi})+\delta(\xi+\frac{\omega}{2\pi})\\
& \widehat{y^{+}}(\xi) = \hat{x}(\xi) +\frac{1}{r}(\delta(\xi-\frac{\alpha}{2\pi}) +\delta(\xi+\frac{\alpha}{2\pi}))
\end{align*}
leading to 
\begin{align*}
& S_{x}(\xi)= \delta(\xi-\frac{\omega}{2\pi}) +\delta(\xi+\frac{\omega}{2\pi})\\
& S_{y^{+}}(\xi) = S_{x}(\xi) + \frac{1}{r^2}(\delta(\xi-\frac{\alpha}{2\pi}) +\delta(\xi+\frac{\alpha}{2\pi}))
\end{align*}
Normalising the PSDs, we get
\begin{align*}
& s_{x}(\xi)= \frac{1}{2}(\delta(\xi-\frac{\omega}{2\pi}) +\delta(\xi+\frac{\omega}{2\pi}))\\
& s_{y^{+}}(\xi)=\left(\frac{r^2}{r^2+1}\right)s_{x}(\xi)+ \left(\frac{1}{2(r^2+1)}\right)(\delta(\xi-\frac{\alpha}{2\pi})\\ & \hspace{4cm}+\delta(\xi+\frac{\alpha}{2\pi})).
\end{align*}
Therefore, moving the mass of $s_{y^{+}}$ onto $s_{x}$ boils down to moving the mass located in the Dirac $-\frac{\alpha}{2\pi}$ and $\frac{\alpha}{2\pi}$ onto respectively the Dirac $-\frac{\omega}{2\pi}$ and $\frac{\omega}{2\pi}$, which is twice the same cost. Therefore, we obtain
\begin{align}
 	\WF{y^{+}}{x}&= \left(\frac{2}{2(r^2+1)}\vert \frac{\omega}{2\pi}-\frac{\alpha}{2\pi}\vert^2\right)^{1/2}\\
 	&=\vert \omega-\alpha\vert / (2\pi\sqrt{r^2+1}).\nonumber
 \end{align}
 The derivation of $\WF{y^{-}}{x}$ works similarly.

\subsection{Proof of Proposition \ref{prop:conv}}
\label{appendix:proof}

\begin{proof}
By the normalised version of the Bochner theorem in eq.~\eqref{eq:Wiener2}, and since the Fourier transform and its inverse are continuous, when $n$ tends to infinity one has that $\brn(h)\to r(h)$ is equivalent to $\bsn(\xi)\to s(\xi)$, the NPSDs associated to $\byn$ and $y$. We next prove the right and left implications respectively. Note that the convergence in the Wasserstein metric is equivalent to the weak convergence and the convergence of the moments of order $p$ at the same time (see \textit{e.g.} \cite{villani2003topics}).

\textbf{[$\Rightarrow$]} Let us assume that $\bsn(\xi)\to s(\xi)$. By Scheff\'e's lemma, $\bsn$ weakly converges to $s$. Moreover, since $\bsn$ is supported on a compact (as $y$ is band-limited), the dominated convergence theorem guarantees that $\lim_{n\to\infty} \int \vert \xi\vert^2\td\bsn(\xi)=\int \vert \xi\vert^2 \td s(\xi)$. This implies that $\lim_{n\to\infty}W_2(\bsn,s)= 0$ (see e.g. Theorem 7.12 in \cite{villani2003topics}) and thus $\lim_{n\to\infty}\WF{\byn}{y} = 0$.

\textbf{[$\Leftarrow$]} Conversely, if $W_2(\hat{s}_n,s)\to 0$ when $n\to\infty$, then $\bsn$ weakly converges to $s$, and by the L\'evy's continuity theorem, we have that $\lim_{n\to\infty}\int e^{jh\xi}\td\bsn(\xi)= \int e^{jh\xi}\td s(\xi)$, which directly implies that $\lim_{n\to\infty} \brn(h) = r(h)$ for all $h$. 
\end{proof}

\subsection{Principal geodesic analysis: a brief review}
\label{appendix:PCA}

We rely on the method proposed by \cite{fletcher2004principal,petersen2016functional}, which is next adapted to our one-dimensional setting  in a simplified manner, after defining the Wasserstein mean of distributions.
Let $\{s_n\}_{n=1}^N$ be a set of $N$ distributions 
in $\PP_2(\R^d)$. A Wasserstein barycenter $\bar{s}$---or mean---of this family, is defined as the Fr\'echet mean with respect to the Wasserstein distance\footnote{In one dimension, the barycenter is closed form: $F^{-}_{\bar{s}} =\frac{1}{n}\sum F^{-}_{s_i}$.}, as introduced by \cite{agueh2011barycenters}. In other words, the Wasserstein barycenter  is a solution of the following minimisation problem:
\begin{equation}
\label{def:barycenter}
\bar{s} \in \argmin_{s\in\PP_2(\R^d)} \ \frac{1}{N}\sum_{n = 1}^{N} W_2^2(s_n,s).
\end{equation}
The PGA procedure is then as follows:
\begin{enumerate}
 	\item For each distribution $s_n$, compute the projection $\ell_n$ onto the tangent space at the barycenter $\bar{s}$ using the logarithmic-map according to 
 	\begin{equation}
 		\ell_n = \log_{\bar{s}}(s_n) := F^{-}_{s_n}\circ F_{\bar{s}}-\operatorname{id},
 	\end{equation}
 	where recall that $F_{\bar{s}}$ denotes the cumulative function of $\bar{s}$ and $F^{-}_{s_n}$ the inverse cumulative function of $s_n$.
 	\item Perform FPCA on the log-map projections $\{\ell_n\}_{n=1}^N$ in the Hilbert space $\L_2$ weighted by the barycenter $\bar{s}$ (namely  $\L_{2,\bar{s}}$), using $K$ components. Therefore, we can denote  the projection of $\ell_n$ onto the $k\textsuperscript{th}$ component as
 	 \begin{equation}
 		p_{k,n}:=\hat{\ell} +t_{k,n}v_k,
 	\end{equation}
 	where $v_k$ is the $k\textsuperscript{th}$ eigenvector of the log-map's covariance matrix, $t_{k,n}$ is the location of the projection of $\ell_n$ onto $v_k$, and $\hat{\ell}$  is the Euclidean mean of $\{\ell_n\}_{n=1}^N$.
 	\item Map each $\{p_{k,n}\}_{k=1,n=1}^{K,N}$ back to the original distribution space using the exponential map 
 	\begin{equation}
 		g_{k,n} = \exp_{\bar{s}}(p_{k,n}) := (\operatorname{id}+p_{k,n})\#\bar{s},
 	\end{equation}
 	where the pusforward operator $\#$ reads for a function $T$ as $( T  \# \bar{s}  )(A) =  \bar{s}\{x\in\R^d | T(x)\in A\}$, for any $A\subset\R^d$.
 \end{enumerate} 
 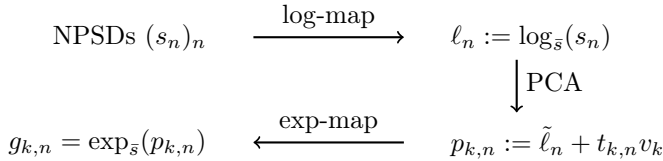
\begin{figure}[ht!]
\centering
\begin{tikzpicture}
\draw [black] (0,0) node[left]{NPSDs $(s_n)_n$};
\draw [->] [black, thick] (0.5,0) to (2.5,0);
\draw [black] (1.5,0) node[above]{$\log$-map};
\draw [black] (3,0) node[right]{$\ell_n:=\log_{\bar{s}}(s_n)$};
\draw [->] [black, thick] (4,-0.3) to (4,-1);
\draw [black] (4,-0.6) node[right]{PCA};
\draw [black] (3,-1.4) node[right]{$p_{k,n}:= \tilde{\ell}_n + t_{k,n}v_k$};
\draw [->] [black, thick] (2.5,-1.4) to (0.5,-1.4);
\draw [black] (1.5,-1.4) node[above]{$\exp$-map};
\draw [black] (0,-1.4) node[left]{$g_{k,n}=\exp_{\bar{s}}(p_{k,n})$};
\end{tikzpicture}
\caption{PGA of NPSDs in the frequency domain where $\tilde{\ell}_n$ is the Euclidean mean of $(\ell_n)_n$, $(v_k)_k$ are the eigenvectors of the covariance matrix of the $\log$-maps $(\ell_n)_n$ and $(t_{k,n})$ is the inner product of the $n\textsuperscript{th}\ \log$-map with the $k\textsuperscript{th}$ eigenvectors.}
\label{fig:PGA}
\end{figure}
As a result, observe that for every pair $(k,n)$, $g_{k,n}$ is a valid distribution that corresponds to the projection of the $n\textsuperscript{th}$ original distribution $s_n$, onto the $k\textsuperscript{th}$ geodesic component $g_{k}$ defined by $g_k(t) : = \exp_{\bar{s}}(\hat{\ell} +t v_k)$, which is a distribution for each $t\in\R$. This is illustrated in Fig. \ref{fig:PGA}.


Although computationally fast, the above procedure might not provide exact PGA, since a geodesic in the Wasserstein space is not exactly the image under the exponential map of straight lines in $\L_{2,\bar{s}}$ (see \cite{bigot2017geodesic} for more details). Alternative methods are those proposed by \cite{cazelles2018geodesic} based on \cite{bigot2017geodesic}, and \cite{masarotto2019procrustes} which performs PCA on Gaussian processes by applying standard PCA on the space of covariance kernel matrices (linked to the PSDs through Bochner's theorem).

\subsection{Algorithm for PGA in Wasserstein-Fourier}
\label{appendix:PCA_algo}

\CR{The following Algorithm \ref{algo:PGA} is developed in \url{https://github.com/GAMES-UChile/Wasserstein-Fourier}.}
\begin{algorithm}
\CR{{\bf Input:} $N$ time series $(x_n)_{i=n,\ldots,N}$\\
{\bf 1:} Compute the corresponding NPSDs $(s_n)_{n=1,\ldots,N}$\\
{\bf 2:} Compute the quantile functions $(F^{-}_{s_n})_{n=1,\ldots,N}$\\
{\bf 3:} Compute the quantile of the Wasserstein barycenter $F^{-}_{\bar{s}} =\frac{1}{N}\sum_{n=1}^N F^{-}_{s_n}$ and then its cumulative function $F_{\bar{s}}$\\
{\bf 4:} For each $n=1,\ldots,N$, compute the projection $\ell_n = F^{-}_{s_n}\circ F_{\bar{s}}-\operatorname{id}$, and their Euclidean mean $\bar{\ell}=\frac{1}{N}\sum_{n=1}^N \ell_n$\\
{\bf 5:} Compute the empirical covariance operator $C\ell = \frac{1}{N}\sum_{n=1}^N\langle \ell_n-\bar{\ell},\ell\rangle_{\bar{s}}\ (\ell_n-\bar{\ell})$\\
{\bf 7:} Compute $(v_k)_{1\leq k \leq K}$, the eigenvectors associated to the $K$ largest eigenvalues of $C$, and the location of the projection of $\ell_n$ onto $v_k$, namely $t_{k,n} = <\ell_n-\hat{\ell},v_k>_{\bar{s}}$\\
{\bf 8:} Construct the projections $p_{k,n}:=\hat{\ell} +t_{k,n}v_k$\\
{\bf 9:} Compute the geodesic principal component back into the Wasserstein space $g_{k,n} = (\operatorname{id}+p_{k,n})\#\bar{s}$ 	

 \caption{PGA in Wasserstein-Fourier}}
\label{algo:PGA}
\end{algorithm}

\subsection{Number of samples for datasets of Section \ref{sec:log_reg}}
\label{appendix:dataset}

\begin{table}[ht!]
\centering
\begin{tabular}{|l|l|}
 class name & $\#$ samples   \\ \toprule
 air conditioner & 442    \\
 car horn & 88      \\
 children playing & 324 \\
 dog barking & 367 \\
 drilling & 646 \\
 engine idling & 455 \\
 gun shot & 132 \\
 jackhammer & 408 \\
 siren & 189 \\
 street music & 436
\end{tabular}
\vspace{0.2cm}
\caption{Urban sound dataset}
\label{table:urban}
\end{table}
\begin{table}[ht!]
\centering
\begin{tabular}{|l|l|}
 class name & $\#$ samples   \\ \toprule
 draw & 100    \\
 point & 100      \\
 old & 215 \\
 young & 236 
\end{tabular}
\vspace{0.2cm}
\caption{Body-motion dataset}
\label{table:motion}
\end{table}

\end{document}